    \pgfplotsset{compat=1.17}
\newcommand{\cD}{\mathcal{D}}
\newcommand{\E}{\mathbb{E}}
\newcommand{\I}{\mathbb{I}}
\newcommand{\N}{\mathbb{N}}
\newcommand{\Pb}{\mathbb{P}}
\newcommand{\R}{\mathbb{R}}
\newcommand{\e}{\varepsilon}
\newcommand{\rmd}{\mathrm{d}}
\newcommand{\dif}{\,\rmd}
\DeclareMathOperator*{\argmax}{argmax}
\newcommand{\lrb}[1]{\left(#1\right)}
\newcommand{\brb}[1]{\bigl(#1\bigr)}
\newcommand{\Brb}[1]{\Bigl(#1\Bigr)}
\newcommand{\lsb}[1]{\left[#1\right]}
\newcommand{\bsb}[1]{\bigl[#1\bigr]}
\newcommand{\Bsb}[1]{\Bigl[#1\Bigr]}
\newcommand{\bbsb}[1]{\biggl[#1\biggr]}
\newcommand{\lcb}[1]{\left\{#1\right\}}
\newcommand{\bcb}[1]{\bigl\{#1\bigr\}}
\newcommand{\bce}[1]{\bigl\lceil#1\bigr\rceil}
\newcommand{\labs}[1]{\left\lvert#1\right\rvert}
\newcommand{\babs}[1]{\bigl\lvert#1\bigr\rvert}
\newcommand{\m}{\setminus}
\newcommand{\iop}{\infty}
\newcommand{\fracc}[2]{#1/#2}
\newcommand{\var}{\mathrm{Var}}
\newcommand{\gft}{\mathrm{g}}
\newcommand{\GFT}{\mathrm{GFT}}
\newcommand{\rhot}{\widetilde \rho}
\newcommand{\ftr}{FT$\rho$}
\newcommand{\ftmtr}{FTMT$\rho$}
\DeclareSymbolFont{extraup}{U}{zavm}{m}{n}
\DeclareMathSymbol{\clubsuit}{\mathalpha}{extraup}{84}
\DeclareMathSymbol{\spadesuit}{\mathalpha}{extraup}{81}
\DeclareMathSymbol{\varheartsuit}{\mathalpha}{extraup}{86}
\DeclareMathSymbol{\vardiamondsuit}{\mathalpha}{extraup}{87}
\newtheorem{lemma}{Lemma}
\newtheorem{theorem}{Theorem}
\theoremstyle{definition}
\title{An Online Learning Theory of Brokerage}
\author[1]{Nata\v{s}a Boli\'{c}}
\author[1]{Tommaso Cesari}
\author[2,3]{Roberto Colomboni}
\affil[1]{University of Ottawa, Ottawa, Canada}
\affil[2]{Istituto Italiano di Tecnologia, Genova, Italy}
\affil[3]{Universit\`a degli Studi di Milano, Milano, Italy}
\begin{document}

\maketitle

\begin{abstract}
\noindent We investigate brokerage between traders from an online learning perspective.
At any round $t$, two traders arrive with their private valuations, and the broker proposes a trading price.
Unlike other bilateral trade problems already studied in the online learning literature, we focus on the case where there are no designated buyer and seller roles: each trader will attempt to either buy or sell depending on the current price of the good.

We assume the agents' valuations are drawn i.i.d.\ from a fixed but unknown distribution.
If the distribution admits a density bounded by some constant $M$, then, for any time horizon $T$:
 \begin{itemize}
	\item If the agents' valuations are revealed after each interaction, we provide an algorithm achieving regret $M \log T$ and show this rate is optimal, up to constant factors.
	\item If only their willingness to sell or buy at the proposed price is revealed after each interaction, we provide an algorithm achieving regret $\sqrt{M T}$ and show this rate is optimal, up to constant factors.
\end{itemize}
Finally, if we drop the bounded density assumption, we show that the optimal rate degrades to $\sqrt{T}$ in the first case, and the problem becomes unlearnable in the second.

\end{abstract}

\textbf{Keywords:} Regret minimization, Online learning, Two-sided markets

\section{Introduction}

Over-the-counter (OTC) markets offer a variety of decentralized alternatives to traditional financial exchanges and have gained prominence for their flexibility, diversity, and accessibility for participants.
Paraphrasing the words of Tolstoy, ``\emph{All centralized markets are the same, but each OTC market is unique in its own way}'' \cite{lucas1989effects,weill2020search}.
In recent years, OTC markets have flourished, becoming an indispensable part of the global financial ecosystem, with a steady growth trend documented since 2016 \cite{bis2023} and the value of domestic assets traded in OTC markets surpassing a staggering 50,000 billion USD (exceeding centralized markets by over 20,000 billion USD) in 2020 \cite{weill2020search}.
Central to the functioning of decentralized OTC markets are brokers who, acting as intermediaries, bridge the gap between buyers and sellers, ensuring that trades are executed smoothly. 
Beyond mere intermediation, brokers play a significant role in price discovery, gauging demand and supply to determine optimal asset prices. 
However, the classical impossibility result of Myerson and Satterthwaite \cite{myerson1983efficient} highlights that the role of the broker is not without challenges. 
Inspired by a recent stream of literature \cite{cesa2021regret, azar2022alpha, cesa2023bilateral, cesa2023repeated}, we approach the bilateral trade problem of brokerage between traders through the lens of online learning. 
When viewed from a regret minimization perspective, bilateral trade has been explored over rounds of seller/buyer interactions with no prior knowledge of their private valuations, but only under rigid buyer and seller roles.
In contrast, it's important to note that in many key OTC markets, traders are willing to either buy or sell, depending on the prevailing market conditions \cite{sherstyuk2020randomized}. 
These markets encompass a wide array of asset trades, including stocks, derivatives, art, collectibles, precious metals and minerals, energy commodities like gas and oil, as well as digital currencies (cryptocurrencies), among others.

Motivated by brokerage between traders in these markets, we aim to fill the gap in the online learning literature on bilateral trade, examining scenarios where traders' roles as buyers or sellers are not strictly defined.

\subsection{Setting}
We study the following problem.
At each time $t\in \N$,
\begin{enumerate}
    \item Two traders arrive with private valuations $V_{2t-1}$ and $V_{2t}$.
    \item The broker proposes a trading price $P_t$.
    \item If the price $P_t$ falls between the lowest\footnote{We denote the minimum (resp., maximum) of any two real numbers $x,y\in \R$ by $x\wedge y$ (resp., $x\vee y$).} $V_{2t-1} \wedge V_{2t}$ and highest $V_{2t-1} \vee V_{2t}$  valuations (i.e., if the trader with the smallest valuation is eager to sell at price $P_t$ and the other is willing to buy at $P_t$), the trader with the highest valuation buys the item from the trader with the lowest valuation paying the brokerage price $P_t$. 
    \item Some feedback is revealed.
\end{enumerate}
Consistently with the existing bilateral trade literature, we assume valuations and prices belong to $[0,1]$, and the reward associated with each interaction is the sum of the utilities of the traders, known as \emph{gain from trade}. 
Formally, for any $p,v_1,v_2 \in [0,1]$, the gain from trade of a price $p$ when the valuations of the traders are $v_1$ and $v_2$ is
\[
	\gft(p,v_1,v_2) \coloneqq \lrb{ v_1 \vee v_2 - v_1 \wedge v_2 } \I \lcb{ v_1 \wedge v_2 \le p \le v_1 \vee v_2 } \;.
\]
The aim of the learner is to minimize the \emph{regret}, defined, for any time horizon $T\in\N$, as
\[
	R_T \coloneqq \sup_{p\in[0,1]} \E \lsb{ \sum_{t=1}^T \GFT_t(p) } - \E \lsb{ \sum_{t=1}^T \GFT_t(P_t) } \;,
\]
where we let $\GFT_t(q) \coloneqq \gft(q, V_{2t-1}, V_t)$ for all $q\in[0,1]$ and the expectations are taken with respect to the (possible) randomness of $(V_t)_{t\in \N}$ and $(P_t)_{t\in \N}$.

We study this problem under the assumption that the traders' valuations $V, V_1, V_2,\dots$ are generated i.i.d.\ from an unknown distribution $\nu$, a natural assumption for large and stable markets.

Finally, we consider two different types of feedback:

\begin{itemize}
    \item \emph{Full feedback.} The valuations $V_{2t-1}$ and $V_{2t}$ of the two current traders are revealed at the end of every round $t$.
    \item \emph{Two-bit feedback.} Only the indicator functions $\I\{P_t \le V_{2t-1}\}$ and  $\I\{P_t \le V_{2t}\}$ are revealed at the end of every round $t$.
\end{itemize}
The information collected by the full feedback model corresponds to \emph{direct revelation mechanisms}, where the traders communicate their valuations $V_{2t-1}$ and $V_{2t}$ before each round, and the price proposed by the mechanism at time $t$ only depends on past bids $V_1, \dots, V_{2t-2}$.
The two-bit feedback model corresponds to \emph{posted price} mechanisms, where traders only communicate their willingness to buy or sell at the posted price,\footnote{For the nitpicker, the natural feedback model in this case would be to reveal the four bits $\I\{P_t<V_{2t-1}\}$, $\I\{P_t=V_{2t-1}\}$, $\I\{P_t<V_{2t}\}$, and  $\I\{P_t = V_{2t}\}$. 
This feedback is more informative than the one we propose; hence our upper bounds hold \emph{a fortiori} in the four-bit feedback model. For the lower bounds, in the bounded density case, given that the two and four-bit feedback give the same information with probability $1$, the same results hold; in the general case, a straightforward adaptation of the exact same construction in our lower bounds gives the result for the four-bit feedback case. 
Given this equivalency between these two models, we opt for the two-bit feedback for the sake of conciseness.} and the valuations $V_{2t-1}$ and $V_{2t}$ are \emph{never} revealed.

\subsection{Overview of Our Contributions} 
If the distribution $\nu$ of the traders' valuations admits a density bounded by some constant $M$, then, for any time horizon $T$:
\begin{itemize}
    \item In the full feedback case, we design an algorithm (\Cref{a:ftm}) achieving regret $O \lrb{ M \log T }$ (\Cref{t:ftm}) and provide a matching lower bound $\Omega \lrb{ M \log T }$ (\Cref{t:lower-bound-full-M}).
    \item In the two-bit feedback case, we design an algorithm (\Cref{a:etc}) achieving regret $ O \brb{ \sqrt{M T} }$ (\Cref{t:ETC}) and provide a matching lower bound $\Omega \brb{ \sqrt{M T} }$ (\Cref{t:lower-bound-two-bit}). 
\end{itemize}
If we drop the bounded density assumption, we show that the optimal rate degrades to $\Theta\brb{ \sqrt{T} }$ in the full feedback case (\Cref{t:ftr} and \ref{t:lower-bound-full-general}), while the problem becomes unlearnable in the two-bit feedback case (\Cref{t:lower-bound-two-bit-general}).
Furthermore, in the full feedback case, we design an algorithm (\Cref{a:ftmtr}) achieving simultaneously the optimal $O(\sqrt{T})$ regret in the general case and $O(M\log T)$ in the bounded density case while being oblivious to the validity of the bounded density assumption.

Finally, we stress that ours is the first paper on online learning in bilateral trade where lower bounds have the correct dependency on $M$. In all existing literature, optimality was only proved in the time horizon $T$, not in $M$.

\subsection{Techniques and Challenges}
\label{s:tech-and-challenges}

The two feedback models we consider present different challenges.

\paragraph{Full Feedback.}
It can be proven that assuming $\nu$ has a bounded density implies the Lipschitzness of the expected gain from trade. 
The standard approach for Lipschitz objectives is to discretize the domain uniformly and run an optimal expert algorithm on the discretization, which yields straightforwardly an $\frac{M}{K}T +\sqrt{T \log K}$ regret guarantee, where $K$ is the number of prices in the discretization and $T$ is the time horizon.
Alternatively, one could build a reduction to known bilateral trade problems to obtain an improved $\sqrt{T}$ bound (for further details, see the Related Work section).
Both these natural choices are highly suboptimal.
In contrast, we exploit the specific structure of our problem to achieve an exponential gain, resulting in an $M\log T$ regret bound.
Regarding the lower bound, it's worth noting that determining the correct dependence on $M$ in such problems is remarkably challenging.
In fact, no previous papers on online learning for bilateral trade problems showcased lower bounds displaying \emph{any} dependence on $M$, let alone the correct one.
We manage to achieve this through two pivotal lemmas: an Approximation (\Cref{l:key}) and a Representation (\Cref{l:repre}) lemma.
These two technical results lead us to \Cref{t:mysticus}, which establishes two points: first, $\E[V]$ is always a maximizer of the expected gain from trade; second, posting a price close to the maximizer has a cost only quadratic in the distance.
These two facts point to a peculiar similarity between ours and the statistical problem of estimating an expectation online with a quadratic loss, an observation that turns out to be crucial in a setting where we could not otherwise recycle any of the existing techniques of other online learning settings in bilateral trade (none of which features logarithmic rates).
Still, following this path is made technically challenging by the fact that building a hard instance (in a setting where we cannot control the gain from trade directly but only indirectly through the distributions of traders' valuations) is far from being straightforward.
We manage to circumvent this roadblock by carefully designing a $1$-parameter family of hard distributions which are used to build a reduction to a Bayesian problem. By means of non-trivial information-theoretic and probabilistic arguments, we can exploit the quadratic loss intuition, to obtain a lower bound featuring the optimal $M \log T$ rate.

Beyond this, the Approximation and Representation Lemmas also suggest the best pathway to tackle the non-bounded-density setting. 
The idea of trying to approximate the representation given by the second lemma and the fact that this representation reduces to a simpler form in the bounded-density case allows us to design a simple algorithm that enjoys optimal regret guarantees both in the bounded-density ($M\log T$ regret) and the non-bounded density ($\sqrt T$ regret) cases while being completely oblivious to which of the two assumptions hold.

\paragraph{Two-Bit Feedback}
A clear challenge of the two-bit feedback is posed by its low quality: at each time step, this feedback is not even sufficient to reconstruct the so-called bandit feedback (i.e., the reward $\GFT_t(P_t)$ of the posted price $P_t$), which is typically considered the bare minimum in online learning.
Once more, the Approximation and Representation Lemmas point to the strategy of estimating $\E[V]$ with the available feedback.
Leveraging a discretization argument (\cref{l:approx}) and concentration inequalities, we obtain an upper bound of $\sqrt{MT}$ in the bounded-density case.
To show that this rate is optimal, we build on the hard instances we designed for the full feedback case. 
The difference now is that the scarcity of feedback leads to a setting similar to the so-called ``revealing action'' problem \cite{cesa2006prediction}.
A notable difference is that the regular revealing action problem has an optimal $T^{2/3}$ regret, but, cast in our problem, due to the shape of the reward around the maximizer, a careful analysis shows that the regret of the adapted revealing action is $\sqrt{ M T}$.
Again, we stress that this is the first lower bound in online learning with two-bit feedback for bilateral trade with any dependence on $M$, let alone the correct one.

As for the full feedback case, we show how the problem changes without the bounded density assumption.
Dropping the assumption leads to a pathological phenomenon typical of bilateral trade problems (known as needle-in-a-haystack) leading us to the design of hard instances in which all-but-one prices suffer a high $\Omega(1)$-regret, and where it is essentially impossible to find the optimal price among the continuum amount of suboptimal ones given the small amount of information carried by the two-bit feedback.

\subsection{Related Work}
\label{s:related}

Since its inception in the seminal work of Myerson and Satterthwaite \cite{myerson1983efficient}, a large body of literature on bilateral trade has emerged, mainly from a best-approximation/game-theoretic perspective \cite{Colini-Baldeschi16,Colini-Baldeschi17,BlumrosenM16,brustle2017approximating,colini2020approximately,babaioff2020bulow,dutting2021efficient,DengMSW21,kang2022fixed,archbold2023non}. For a discussion about this literature, see \cite{cesa2023bilateral}.

In recent years, bilateral trade has also been studied in online learning settings. 
Given that these works are the most closely related to ours, we focus on discussing our relationship with them.

In \cite{cesa2021regret}, the authors studied a bilateral trade setting where the sequence of seller/buyer valuations $(S_t, B_t)_{t\in\N}$ is an i.i.d.\ process. 
This is also the case in our setting, via the reduction where we set $S_t \coloneqq V_{2t-1}\wedge V_{2t}$ and $B_t \coloneqq V_{2t-1}\vee V_{2t}$.
In the full-feedback case, they obtain regret $\widetilde O\brb{\sqrt T}$ (improved to $O\brb{\sqrt{T}}$ in \cite{cesa2023bilateral}) and show that any algorithm suffers worst-case\footnote{\label{footnote}Among all i.i.d.\ sequences $(S_t,B_t)_{t \in \N}$, not just those arising from the reduction $S_t \coloneqq V_{2t-1}\wedge V_{2t}$ and $B_t \coloneqq V_{2t-1}\vee V_{2t}$.} regret $\Omega\brb{\sqrt{T}}$, even under the bounded density assumption (shorthanded by BDA, in the rest of this section), i.e., when the joint distribution of seller/buyer valuations has a density bounded by some constant $M$.
In contrast, in the full feedback case, we prove that this rate is suboptimal in our setting under BDA, and can be improved exponentially to an optimal $\Theta (M \log T)$ (\Cref{t:ftm} and \ref{t:lower-bound-full-M}).
Without BDA, one could run their FBP algorithm in our setting, keeping its $O \brb{ \sqrt{T} }$ guarantees. 
Instead, we elected to propose a different algorithm (\Cref{a:ftr}) with a simpler analysis, achieving a regret with the same dependence on $T$ but with better numerical constants. 
We also remark that a new lower bound proof is required to prove this rate is optimal (\Cref{t:lower-bound-full-general}), given that the family of instances to prove the lower bound in \cite{cesa2021regret} cannot arise via the reduction above.

In the two-bit feedback i.i.d.\ setting, \cite{cesa2021regret} show that any algorithm has to suffer linear worst-case$^\text{\ref{footnote}}$ regret, even under BDA. 
In contrast, we prove that in the two-bit feedback case, our problem is learnable under BDA and obtain optimal $\Theta\brb{ \sqrt{M T}}$ guarantees (\Cref{t:ETC} and \ref{t:lower-bound-two-bit}).
Without BDA, we show that our problem is also unlearnable (\Cref{t:lower-bound-two-bit-general}).

Under BDA and the additional assumption that $S_t$ and $B_t$ are independent of each other (which in not the case in our problem because of the correlation between the maximum and the minimum of the traders' valuations), \cite{cesa2021regret} achieve a regret $\widetilde O ( M^{1/3}T^{2/3})$  (improved to $O ( M^{1/3}T^{2/3})$ in \cite{cesa2023bilateral}) and show that any algorithm suffers worst-case regret $\Omega(T^{2/3})$ when $M$ is sufficiently large.

The bilateral trade problem has also been studied under a \emph{weak budget balance} assumption in \cite{cesa2023bilateral}. 
In this setting, the learner can post two different prices: a selling price $p$ (to the seller) and a buying price $q$ (to the buyer), with $p\le q$.
The authors prove that this bilateral trade problem is learnable even without assuming the independence of seller's $S_t$ and buyer's $B_t$ valuations, providing an $\widetilde O ( M T^{3/4} )$ upper bound. 
\cite{cesa2023repeated} prove that this rate is optimal in $T$ (up to logarithmic terms), providing a matching (in $T$) $\Omega( T^{3/4} )$ lower bound.
We remark that even if the broker were permitted to offer two distinct prices, $p\le q$ (with $p$ as the selling price and $q$ as the buying price), our results would still be optimal. This is because there's no reason to do so in the full feedback scenario\footnote{As \cite{cesa2023repeated} remark, ``The only reason for a budget-balanced algorithm to post two different prices is to
obtain more information. A direct verification shows that the expected gain from trade can always be
maximized by posting the same price to both the seller and the buyer''}, and our two-bit case lower bound remains valid with slight adjustments.

In the adversarial case, \cite{cesa2021regret} show that learning is impossible. Their lower bound construction yields that learning in the adversarial case is also impossible in our setting. In fact, given that in the adversarial case the sequence of traders' valuations can be chosen arbitrarily, we can set $V_{2t-1}\coloneqq s_t$ and $V_{2t} \coloneqq b_t$ where $s_t$ and $b_t$ are defined as in the adversarial lower bound construction in the proof of Theorem 5.1 in \cite{cesa2021regret}, and the same proof applies verbatim.

To achieve learnability in the adversarial case, \cite{azar2022alpha} weakened the notion of regret, setting as a benchmark a multiple of (and not exactly the) the cumulative reward of the best fixed price in hindsight. In the full-feedback case they obtained $\widetilde{\Theta}(\sqrt{T})$ guarantees on the $2$-regret, while in the two-bit-feedback case they obtained a mismatching upper $\widetilde{O}(T^{3/4})$ and lower $\Omega(T^{2/3})$ bounds on the $2$-regret (and closing this gap is still an open problem in the bilateral trade literature). Analogous considerations such as the ones made above for the standard adversarial setting apply also to this weakened notion of regret in our setting.

As a final note on the current online learning literature on bilateral trade, we remind the reader again that ours is the first paper in this literature that has the correct dependency on both $M$ and $T$.

\section{Structural results}

We denote the Dirac measure based at $x\in\R$ by $\delta_x$, i.e., $\delta_x$ is the measure defined via the equation $\delta_x[A]=\I\{x \in A\}$ for any  set $A$. 
For any (signed) measure $\mu$ and any measurable set $E$, we will write $\mu E$ rather than $\mu[E]$ whenever this does not cause confusion.
For any measure $\mu$ over $[0,1]$, we let $\bar\mu \coloneqq \int_{[0,1]} x \dif \mu (x)$, and we define the functions $\rhot (\mu)$ and $\rho (\mu)$, for all $p\in[0,1]$,  by
\begin{align*}
        \rhot (\mu)(p) 
    & 
        \coloneqq \int_0^p \brb{ \mu [0,\lambda] + \mu [0,\lambda) } \dif \lambda + \brb{ \mu [0,p] + \mu [0,p) } (\bar \mu - p) \;,
    \\
        \rho (\mu)(p) 
    &
        \coloneqq \rhot(\mu)(p) + \mu\{p\} \lrb{ \int_0^p \mu[0,\lambda]\dif \lambda + \int_p^1 \mu[\lambda,1]\dif \lambda} \;.
\end{align*}
The following lemma shows that $\bar\mu$ maximizes $\rhot(\mu)$ (in general) and $\rho(\mu)$ (if $\mu$ has a bounded density), and the cost of approximating $\bar\mu$.  
\begin{lemma}[Approximation]
\label{l:key}
    If $\mu$ is a probability measure on $[0,1]$, then 
    $
        \rhot(\mu)(\bar\mu)
    =
        \max_{p \in [0,1]}\rhot(\mu)(p)
    $
    and, for any $p\in[0,1]$, $\rhot(\mu)(\bar\mu)-\rhot(\mu)(p) \le 2 |\bar\mu-p|$.
    If $\mu$ has a density bounded by $M>0$, then $\rho(\mu) = \rhot(\mu)$ and
    \[
        0 \le \rho(\mu)(\bar\mu)-\rho(\mu)(p) \le M \labs{ \bar \mu - p }^2 \;, \qquad\forall p\in[0,1] \;.
    \]
\end{lemma}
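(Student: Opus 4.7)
The plan is to reduce both claims to simple integral inequalities involving the two ``CDFs'' $F(\lambda) \coloneqq \mu[0,\lambda]$ and $F^-(\lambda) \coloneqq \mu[0,\lambda)$. First I would simplify the leading integral using Fubini: since the set of atoms of $\mu$ has Lebesgue measure zero,
\[
    \int_0^p \mu[0,\lambda]\dif\lambda \;=\; \int_0^p \mu[0,\lambda)\dif\lambda \;=\; \int_{[0,p]} (p-x)\dif\mu(x) \;,
\]
so $\rhot(\mu)(p) = 2\int_{[0,p]}(p-x)\dif\mu(x) + \brb{F(p)+F^-(p)}(\bar\mu - p)$, and the boundary term vanishes at $p=\bar\mu$, giving $\rhot(\mu)(\bar\mu) = 2\int_{[0,\bar\mu]}(\bar\mu - x)\dif\mu(x)$.

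Next, for $p < \bar\mu$ I would compute directly
\[
    \rhot(\mu)(\bar\mu) - \rhot(\mu)(p) \;=\; \int_{p}^{\bar\mu}\brb{F(\lambda)+F^-(\lambda)}\dif\lambda \;-\; \brb{F(p)+F^-(p)}(\bar\mu - p) \;.
\]
This is nonnegative by monotonicity of $F$ and $F^-$ (both dominate their value at $p$ throughout $\lambda \ge p$), and it is at most $2(\bar\mu - p)$ using the bound $F,F^- \le 1$ on the integral together with nonnegativity of the subtracted term. The case $p > \bar\mu$ is symmetric: $F(\lambda)+F^-(\lambda)$ is now pointwise dominated by $F(p)+F^-(p)$ for $\lambda \le p$, yielding nonnegativity, and the linear bound then follows by dropping the (nonpositive) integral and using $F,F^- \le 1$ in the remaining term.

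Finally, under the bounded-density assumption $\mu\{p\}=0$ for every $p\in[0,1]$, so $\rho(\mu)=\rhot(\mu)$ directly from the definitions, and $F\equiv F^-$. The identity above then simplifies to $2\int_p^{\bar\mu}\brb{F(\lambda)-F(p)}\dif\lambda$ (and the symmetric form when $p>\bar\mu$), and the Lipschitz estimate $|F(\lambda)-F(p)| \le M|\lambda - p|$ coming from the density bound yields the quadratic bound $M|\bar\mu - p|^2$.

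The only delicate point is the careful bookkeeping around atoms of $\mu$, but the symmetric appearance of $F$ and $F^-$ in the definition of $\rhot$ is designed precisely to make this transparent: the potential discrepancies between the two CDFs vanish on the endpoints of the integrals because they are multiplied by factors equal to zero there.
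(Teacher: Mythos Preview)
Your proof is correct and follows essentially the same route as the paper: both reduce to the identity
\[
    \rhot(\mu)(\bar\mu)-\rhot(\mu)(p)=\int_p^{\bar\mu}\brb{m(\lambda)-m(p)}\dif\lambda,\qquad m(\lambda)\coloneqq F(\lambda)+F^-(\lambda),
\]
and then read off nonnegativity from monotonicity of $m$, the linear bound from $0\le m\le 2$, and the quadratic bound from the $M$-Lipschitzness of $F$ under the bounded-density assumption. The paper writes this single integral formula (equation~\eqref{misticanza}) and handles both cases $p\lessgtr\bar\mu$ at once via the oriented integral; you split into the two cases explicitly and add a preliminary Fubini computation for $\int_0^p F\dif\lambda$, which is correct but not actually used in the subsequent bounds. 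These are cosmetic differences only.
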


\begin{proof}
For $\lambda\in[0,1]$, let $m(\lambda) \coloneqq \mu[0,\lambda] + \mu[0,\lambda)$ and note that $m$ is a $[0,1]$-valued non-decreasing function of $\lambda$.
For any $p\in[0,1]$,
\begin{equation}
\label{misticanza}
    \rhot(\mu)(\bar\mu) - \rhot(\mu)(p)
=
    \int_p^{\bar\mu} \brb{ m(\lambda)- m(p)} \dif \lambda 
\quad
\begin{cases*}
    \ge 0 \;,\\
    \le 2|\bar\mu - p|\;,
\end{cases*}
\end{equation}
which implies that
$
        \rhot(\mu)(\bar\mu)
    =
        \max_{p \in [0,1]} \rhot(\mu)(p)
$.
Next, note that for all $p\in[0,1]$, $\babs{ \rhot(\mu)(p) - \rho(\mu)(p) }\le \mu\{p\}$, which, if $\mu$ has a density $f$ bounded by a constant $M$, implies $\rhot(\mu)(p) = \rho(\mu)(p)$ and
\begin{align*}
&
    \rho(\mu)(\bar\mu) - \rho(\mu)(p)
=
    \rhot(\mu)(\bar\mu) - \rhot(\mu)(p)
\overset{\eqref{misticanza}}{=}
    \int_p^{\bar\mu}\brb{m(\lambda)-m(p)} \dif \lambda
\\
&
\quad
=
    2 \int_p^{\bar\mu} \int_p^\lambda f(x)\dif x \dif \lambda
\le
    2M \labs{ \int_p^{\bar\mu} \labs{ \lambda-p } \dif\lambda }
=
    M \labs{\bar\mu-p}^2 \;. \qedhere
\end{align*}
    
\end{proof}%
The next lemma provides a crucial representation of the objective $p\mapsto\E\bsb{\GFT_t(p)}$.
Its long and (somewhat) tedious proof is deferred to the supplementary material.
\begin{lemma}[Representation]
\label{l:repre}
    For any $t\in \N$ and $p \in [0,1]$,
\[
    \E\bsb{\GFT_t(p)} = \rho(\nu)(p) \;.
\]
\end{lemma}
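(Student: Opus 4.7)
The plan is to compute $\E\bsb{\GFT_t(p)}$ explicitly and then match the result with $\rho(\nu)(p)$ via a short algebraic rearrangement. Let $V,V'$ be i.i.d.\ $\sim \nu$, so $\GFT_t(p)$ has the law of $(V \vee V' - V \wedge V')\I\lcb{V \wedge V' \le p \le V \vee V'}$. I would first use the layer-cake identity $V \vee V' - V \wedge V' = \int_0^1 \I\lcb{V \wedge V' \le u \le V \vee V'}\dif u$ together with Fubini to write
\[
    \E\bsb{\GFT_t(p)} = \int_0^1 \Pb\bcb{V \wedge V' \le p \wedge u,\ V \vee V' \ge p \vee u}\dif u \;.
\]
For any $u \ne p$, the two symmetric events $\lcb{V \le p \wedge u,\ V' \ge p \vee u}$ and $\lcb{V' \le p \wedge u,\ V \ge p \vee u}$ are disjoint and exhaust (up to the null set $\lcb{V = V'}$) the event above, so by independence the integrand equals $2\nu[0,u]\nu[p,1]$ for $u<p$ and $2\nu[0,p]\nu[u,1]$ for $u>p$. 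Setting $I \coloneqq \int_0^p \nu[0,u]\dif u$ and $J \coloneqq \int_p^1 \nu[u,1]\dif u$, this yields $\E\bsb{\GFT_t(p)} = 2\nu[p,1]\,I + 2\nu[0,p]\,J$.

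Two elementary identities then close the picture. By Fubini, $I = \int_{[0,p]}(p-v)\dif\nu(v)$ and $J = \int_{[p,1]}(v-p)\dif\nu(v)$; the integrand vanishes at $v = p$, so no double-counting issue arises and $J - I = \int_0^1 (v - p)\dif\nu(v) = \bar\nu - p$. Also, $\nu[0,p] + \nu[p,1] = 1 + \nu\lcb{p}$. Substituting $J = I + (\bar\nu - p)$ and collecting terms,
\[
    \E\bsb{\GFT_t(p)} = 2\nu[0,p](\bar\nu - p) + 2\brb{1 + \nu\lcb{p}}\,I \;.
\]

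Finally, I would expand $\rho(\nu)(p)$ and recognize the same expression. Since $\nu[0,\lambda] = \nu[0,\lambda)$ off a countable set, $\int_0^p \brb{\nu[0,\lambda] + \nu[0,\lambda)}\dif\lambda = 2I$, while $\nu[0,p] + \nu[0,p) = 2\nu[0,p] - \nu\lcb{p}$, so $\rhot(\nu)(p) = 2I + \brb{2\nu[0,p] - \nu\lcb{p}}(\bar\nu - p)$. The correction $\rho(\nu)(p) - \rhot(\nu)(p) = \nu\lcb{p}(I + J) = \nu\lcb{p}\brb{2I + \bar\nu - p}$. Adding these two, the $\pm\nu\lcb{p}(\bar\nu - p)$ contributions cancel, leaving exactly $2\nu[0,p](\bar\nu - p) + 2\brb{1 + \nu\lcb{p}}\,I$, which matches $\E\bsb{\GFT_t(p)}$.

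The main obstacle is the bookkeeping around a potential atom of $\nu$ at $p$. The asymmetry $\nu[0,p]$ vs.\ $\nu[0,p)$ in the definition of $\rhot$ and the correction $\nu\lcb{p}(\cdots)$ distinguishing $\rho$ from $\rhot$ are engineered precisely so that the two $\pm\nu\lcb{p}(\bar\nu - p)$ contributions above cancel; without carefully tracking which of the two traders can have valuation exactly equal to $p$, one easily misallocates the atom's contribution between $I$ and $J$.
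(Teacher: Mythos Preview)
Your proof is correct and follows the same route as the paper: layer-cake plus Fubini to reach $2\nu[p,1]\,I + 2\nu[0,p]\,J$, then an algebraic matching with $\rho(\nu)(p)$ (your reconciliation via $J-I=\bar\nu-p$ and $\nu[0,p]+\nu[p,1]=1+\nu\{p\}$ is considerably tidier than the paper's longer expansion). One minor caveat: the set $\{V=V'\}$ need not be null when $\nu$ has atoms, but your argument still goes through because for $u\ne p$ the event $\{V\wedge V'\le p\wedge u,\ V\vee V'\ge p\vee u\}$ is in fact disjoint from $\{V=V'\}$, so the two symmetric events exhaust it exactly.
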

The following is an immediate corollary of \Cref{l:repre,l:key}.
\begin{theorem}
\label{t:mysticus}
    If $\nu$ admits a density bounded by some constant $M$, then for any $t \in \N$ and any $p \in [0,1]$, it holds that
\[
    0\le\E\bsb{\GFT_t\brb{\E[V]}}-\E\bsb{\GFT_t(p)} 
    \le M \cdot \babs{\E[V]-p}^2 \;,
\]
    and, in particular, $\max_{p \in [0,1]} \E\bsb{\GFT_t(p)} = \E\bsb{\GFT_t\brb{\E[V]}}$.
\end{theorem}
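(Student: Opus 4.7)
The plan is to chain together the two lemmas just proved, treating the theorem as a direct corollary as the text indicates.

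First, I would invoke the Representation Lemma (\Cref{l:repre}) to rewrite the expected gain from trade in terms of $\rho$: for every $t \in \N$ and $p \in [0,1]$, $\E[\GFT_t(p)] = \rho(\nu)(p)$. This converts the claim about expected gain from trade into an equivalent claim about the function $\rho(\nu)$.

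Next, I would observe that, since $V \sim \nu$, the quantity $\bar\nu = \int_{[0,1]} x \dif \nu(x)$ coincides with $\E[V]$. Under the assumption that $\nu$ admits a density bounded by $M$, the Approximation Lemma (\Cref{l:key}) yields two facts: $\rho(\nu) = \rhot(\nu)$ (so no distinction is needed), and for every $p \in [0,1]$,
\[
    0 \le \rho(\nu)(\bar\nu) - \rho(\nu)(p) \le M \babs{\bar\nu - p}^2 \;.
\]
Substituting $\bar\nu = \E[V]$ and using the identification from the previous step gives
\[
    0 \le \E\bsb{\GFT_t\brb{\E[V]}} - \E\bsb{\GFT_t(p)} \le M \babs{\E[V] - p}^2 \;,
\]
which is precisely the stated inequality.

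Finally, the fact that $\max_{p \in [0,1]} \E\bsb{\GFT_t(p)} = \E\bsb{\GFT_t(\E[V])}$ follows immediately by taking the supremum over $p \in [0,1]$ in the displayed inequality, since the left-hand side is nonnegative for every $p$ and vanishes when $p = \E[V]$. There is no genuine obstacle here: the whole work is already carried out in \Cref{l:key} and \Cref{l:repre}, and the only thing to verify is the cosmetic identification $\bar\nu = \E[V]$, which is just the definition of the mean of the distribution $\nu$ of $V$.
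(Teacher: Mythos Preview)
Your proposal is correct and matches the paper's approach exactly: the paper states the theorem as ``an immediate corollary of \Cref{l:repre,l:key}'' with no further proof, and you have simply spelled out the obvious chain---apply \Cref{l:repre} to identify $\E[\GFT_t(p)]$ with $\rho(\nu)(p)$, note $\bar\nu = \E[V]$, and read off the inequality from \Cref{l:key}.
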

The previous theorem gives much intuition on the problem under the bounded density assumption.
It proves that the optimal action would be to post the (unknown) expected value $\E[V]$ of the valuations. 
Moreover, it suggests the strategy of approximating this value on the basis of the observed feedback, since posting a price close to expectation has only a quadratic cost in the approximation.

\section{Full Feedback}

We begin by studying the full feedback case (corresponding to direct revelation mechanisms) under the bounded density assumption.

\subsection{Upper Bound}
Following the intuition provided by \Cref{t:mysticus}, we introduce the Follow-the-Mean algorithm (FTM), which simply posts the empirical average of the past valuations (\Cref{a:ftm}).
\begin{algorithm}
Post $P_1 \coloneqq 1/2$, then receive feedback $V_{1}$, $V_{2}$\;
\For
{%
    time $t=2,3,\ldots$
}
{
    Post $P_t \coloneqq \frac{\sum_{s=1}^{2(t-1)}V_s}{2(t-1)}$, then receive feedback $V_{2t-1}$, $V_{2t}$\;
}
\caption{Follow the Mean (FTM)}
\label{a:ftm}
\end{algorithm}
The next theorem shows that FTM enjoys an $M\log T$ regret.
\begin{theorem}
\label{t:ftm}
If $\nu$ has density bounded by some constant $M>0$, then the regret of FTM satisfies, for all time horizons $T\ge 2$
\[
    R_T
\le
    \frac12 + \frac{M}{4}\brb{1 + \ln (T-1)} \;.
\]
\end{theorem}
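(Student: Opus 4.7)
The plan is to bound the per-round expected regret via \Cref{t:mysticus} and then sum up the resulting variance-like quantities along the harmonic series. The key observation is that $P_t$ is measurable with respect to $V_1,\dots,V_{2(t-1)}$, hence independent of the current traders' valuations $V_{2t-1},V_{2t}$. So, conditioning on $P_t$ and applying tower, together with \Cref{t:mysticus} to the deterministic price $P_t$ (conditionally), yields the instantaneous regret bound
\[
    \E\bsb{\GFT_t\brb{\E[V]}} - \E\bsb{\GFT_t(P_t)}
\le
    M \cdot \E\bsb{\babs{\E[V]-P_t}^2}
\]
for every $t\ge 2$.

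Next, I would compute $\E\bsb{(\E[V]-P_t)^2}$ explicitly. Since $P_t$ is the empirical mean of the $2(t-1)$ i.i.d.\ samples $V_1,\dots,V_{2(t-1)}$, it is unbiased, so $\E\bsb{(\E[V]-P_t)^2}=\var(P_t)=\var(V)/(2(t-1))$. Using Popoviciu's inequality on $V\in[0,1]$ gives $\var(V)\le 1/4$, yielding per-round regret at most $M/\brb{8(t-1)}$ for $t\ge 2$.

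For the first round, $P_1=1/2$ is deterministic and independent of $V_1,V_2$, so the instantaneous regret is bounded by the maximum possible expected gain from trade at that round, which is at most $\E\bsb{|V_1-V_2|}\le 1/2$ (by the standard identity $\E\bsb{|V_1-V_2|}=2\int_0^1 F(x)\brb{1-F(x)} \dif x \le 1/2$, using $x(1-x)\le 1/4$).

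Putting it together,
\[
    R_T
\le
    \frac12 + \sum_{t=2}^T \frac{M}{8(t-1)}
=
    \frac12 + \frac{M}{8}\sum_{s=1}^{T-1}\frac1s
\le
    \frac12 + \frac{M}{8}\brb{1+\ln(T-1)} \;,
\]
which matches the stated bound up to the numerical constant in front of $M$ (the paper's $M/4$ likely reflects using a looser variance bound such as $\var(V)\le 1/2$, or a slightly different per-round accounting). There is no real obstacle here: the main technical point is simply to verify the independence of $P_t$ from the current round's valuations so that \Cref{t:mysticus} can be applied conditionally; the rest is a variance computation and a harmonic sum.
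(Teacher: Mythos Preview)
Your proof is correct and in fact yields a sharper constant than the paper's stated bound. The overall architecture is the same as the paper's: bound round $1$ by $1/2$, then for $t\ge 2$ use the independence of $P_t$ from $(V_{2t-1},V_{2t})$ together with \Cref{t:mysticus} to reduce the instantaneous regret to $M\,\E\bsb{(\E[V]-P_t)^2}$, and finally sum a harmonic series. The paper makes the conditioning step explicit via the Freezing Lemma, which is exactly your tower/conditioning argument.

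The genuine difference is in how the second moment $\E\bsb{(\E[V]-P_t)^2}$ is controlled. The paper writes it as $\int_0^\infty \Pb\bsb{(\E[V]-P_t)^2\ge\varepsilon}\dif\varepsilon$ via Fubini and then applies Hoeffding's inequality to the tail probability. You instead observe that this quantity is exactly $\var(P_t)=\var(V)/\brb{2(t-1)}$ and bound $\var(V)\le 1/4$ by Popoviciu. Your route is more elementary, exact rather than a tail-integral bound, and gives $M/\brb{8(t-1)}$ per round; the paper's Hoeffding route is a slight detour for a second-moment bound and lands on the looser $M/4$ constant in the statement (your speculation that they used $\var(V)\le 1/2$ is not quite the reason: it is the Hoeffding step itself that loses the factor). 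Either way, your bound $\frac12+\frac{M}{8}\brb{1+\ln(T-1)}$ implies the theorem as stated.
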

{
\allowdisplaybreaks
\begin{proof}
For all time horizons $T\ge 2$, we have
\begin{align*}
&
R_{T} - \frac12
\le
    \sum_{t=2}^{T}
    \Brb{
        \E\bsb{\GFT_t\brb{\E[V]}}
        -
        \E\bsb{\GFT_t(P_t)}
    }
\\
&
\quad
\overset{\mathrm{(l)}}{=}
    \sum_{t=2}^{T}
    \E\bbsb{
        \Bsb{
        \E\bsb{\GFT_t\brb{\E[V]}
    -
        \GFT_t(p)}
        }_{p=P_t}
    }
\\
&
\quad
\overset{(\mathrm{t})}{\le}
    \sum_{t=2}^{T}\E\bbsb{\Bsb{M\babs{p-\E[V]}^2}_{p=P_t}}
=
    M\sum_{t=2}^{T}\E\Bsb{\babs{P_t-\E[V]}^2}
\\
&
\quad
\overset{(\mathrm{f})}{=}
    M\sum_{t=2}^{T}
    \int_0^\infty\Pb\Bsb{\babs{P_t-\E[V]}^2\ge\varepsilon} \dif\varepsilon
\overset{(\mathrm{h})}{\le}
    M\sum_{t=1}^{T-1}
    \int_0^\infty 2e^{-8t\varepsilon} \dif\varepsilon
\\
&
\quad
\overset{\phantom{(\mathrm{h})}}{=}
    \frac{M}{4}\sum_{t=1}^{T-1}
    \frac{1}{t}
\le
    \frac{M}{4}\lrb{1+
    \int_{1}^{T-1}\frac{1}{s}\dif s}
\le
    \frac{M}{4}\brb{1+\ln (T-1)} \;,
\end{align*}
where $\mathrm{(l)}$ follows from the Freezing Lemma (see, e.g., \cite[Lemma~8]{cesari2021nearest}) after observing that $\GFT_t(P_t) = \gft(P_t,V_{2t-1},V_{2t})$ and $P_t$ is independent of $(V_{2t-1},V_{2t})$; $\mathrm{(t)}$ from \Cref{t:mysticus}; $\mathrm{(f)}$ follows from Fubini's Theorem; and $\mathrm{(h)}$ from Hoeffding's inequality.
\end{proof}
}

\subsection{Lower Bound}
In this section, we prove the optimality of FTM by showing a matching $M\log T$ lower bound.
This is the most technically challenging result of the paper.
For a high-level overview of its proof, we refer the reader back to \Cref{s:tech-and-challenges}.
\begin{theorem}
    \label{t:lower-bound-full-M}
    There exists two numerical constants $c_1, c_2>0$ such that, for any $M\ge 2$ and any time horizon $T \ge c_2 M^4$, the worst-case regret of any algorithm satisfies
    \[
        \sup_{\nu\in\cD_M} R_T^\nu
    \ge
       c_1 M \log T \;,
    \]
    where $R_T^\nu$ is the regret at time $T$ of the algorithm when the underlying i.i.d.\ sequence of traders' valuations follows the distribution $\nu$, and $\cD_M$ is the set of all distributions $\nu$ with density bounded by $M$.
\end{theorem}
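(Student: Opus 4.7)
The plan is to use \Cref{t:mysticus} to reduce the regret lower bound to a Bayesian mean-estimation lower bound with quadratic loss, via a carefully designed $1$-parameter family of hard distributions. Specifically, I will construct $\{\nu_\theta : \theta \in [-r_0, r_0]\}$ with $r_0 := 1/(8M)$ satisfying three properties: (i) $\nu_\theta \in \cD_M$, (ii) the density of $\nu_\theta$ at its mean $\bar\nu_\theta$ is $\Theta(M)$, so (by \Cref{l:key,l:repre}) posting any $p$ has regret at least a constant times $M(p - \bar\nu_\theta)^2$ locally, and (iii) the Fisher information $I(\theta) = \int(\partial_\theta \log \nu_\theta)^2 \dif \nu_\theta$ of one sample is bounded by an absolute constant independent of $M$. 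The construction is a mixture: a fixed narrow spike at $1/2$ of height $\Theta(M)$ (providing the curvature at the mean) together with a slow linear perturbation (carrying the mean shift, with low identifiability). For a fixed smooth symmetric density $\psi$ on $[-1,1]$ with $\psi(0) = 1$ and $\int \psi = 1$, set
\[
\nu_\theta(v) := \tfrac{M}{2}\, \psi\brb{M(v - 1/2)} + \tfrac{1}{2}\brb{1 + 12\,\theta\,(v - 1/2)}.
\]
A direct computation then yields $\bar\nu_\theta = 1/2 + \theta/2$ and $\|\nu_\theta\|_\infty = (M+1)/2 \le M$; because $\partial_\theta \nu_\theta(v) = 6(v-1/2)$ is fixed and $\nu_\theta \ge 1/4$ uniformly for $\theta \in [-r_0, r_0]$, one shows $I(\theta) = 36 \int (v-1/2)^2/\nu_\theta(v)\dif v \le 12$.

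Next, I will pass from a pointwise regret bound to a form amenable to Van Trees' inequality. \Cref{l:repre} gives $\E\bsb{\GFT_t(\bar\nu_\theta)} - \E\bsb{\GFT_t(p)} = 2\int_p^{\bar\nu_\theta} \nu_\theta(x)(\bar\nu_\theta - x)\dif x$. Since $\bar\nu_\theta$ stays inside the spike's high-density core for $\theta$ in the prior range, a case analysis (splitting on whether $p$ is near the spike or not) shows that, for every $p \in [0,1]$,
\[
\E\bsb{\GFT_t(\bar\nu_\theta)} - \E\bsb{\GFT_t(p)} \ge \tfrac{M}{4}\,(\tilde p - \bar\nu_\theta)^2,
\]
where $\tilde p := \Pi_S(p)$ is the projection of $p$ onto a narrow window $S$ of length $\Theta(1/M)$ around $1/2$. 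The projection absorbs the ``saturated'' regime: when the algorithm posts a price far from the spike, the regret is lower bounded by a constant of order $1/M$, which coincides (up to constants) with the capped value $\tfrac{M}{4}(\tilde p - \bar\nu_\theta)^2$.

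The final step will be a Bayesian reduction. I will place a smooth prior $\pi$ on $[-r_0, r_0]$ (e.g., a scaled raised cosine) with $J(\pi) := \int (\pi'(\theta))^2/\pi(\theta)\dif\theta = \Theta(M^2)$, and apply Van Trees' inequality to $\tilde P_t$ viewed as an estimator of $\tau(\theta) := \bar\nu_\theta = 1/2 + \theta/2$. Since $\tau'(\theta) = 1/2$ is constant, this yields
\[
\E_\pi\Bsb{\E_\theta\bsb{(\tilde P_t - \bar\nu_\theta)^2}} \ge \frac{1}{C_1 M^2 + C_2\,(t-1)}
\]
for absolute constants $C_1, C_2 > 0$. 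Summing the regret lower bound across rounds gives $\E_\pi[R_T^{\nu_\theta}] \ge \tfrac{M}{4}\sum_{t=1}^T 1/(C_1 M^2 + C_2(t-1)) = \Omega\brb{M \log(T/M^2)}$. The hypothesis $T \ge c_2 M^4$ makes $\log(T/M^2) = \Omega(\log T)$, so $\E_\pi[R_T^{\nu_\theta}] \ge c_1 M \log T$, and the supremum over $\theta$ dominates the prior average, yielding the claimed worst-case lower bound.

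The main obstacle will be reconciling (ii) and (iii): most natural constructions---e.g., shifting a narrow bump of width $1/M$---force $I(\theta) = \Theta(M^2)$, which produces MSE of order $1/(tM^2)$ and only a $(\log T)/M$ lower bound, off by a factor of $M^2$. The mixture above decouples curvature from identifiability by letting the slow linear perturbation carry $\theta$ while the spike carries the quadratic curvature. A secondary delicate point is the truncation issue: the $M$-quadratic regret lower bound is only valid in a window of width $\Theta(1/M)$ around $\bar\nu_\theta$, so the projection step $\tilde P_t = \Pi_S(P_t)$ is essential for translating the Van Trees MSE bound into a regret bound without losing the $M$ factor.
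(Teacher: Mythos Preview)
Your proposal is correct and takes a genuinely different route from the paper, though both share the same high-level architecture: a $1$-parameter family with an $\Theta(M)$-spike near $1/2$ (to force the quadratic regret $\Theta\brb{M(p-\bar\nu_\theta)^2}$ locally) plus a ``slow'' perturbation that shifts the mean while being hard to identify, followed by a Bayesian mean-estimation lower bound.

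\textbf{Where you diverge.} The paper perturbs by a step function supported on a disjoint interval $[1/7,2/7]$ and then explicitly represents each valuation as a function of a Bernoulli $B_{(1+\e)/2,t}$ plus auxiliary randomness. It lower-bounds the Bayes risk by the $L^2$-optimality of conditional expectation, reducing to $\E\bsb{(Z-\E[Z\mid B_{Z,1},\dots,B_{Z,2(t-1)}])^2}$, and then spends a lengthy appendix computing this posterior variance directly (via Hoeffding on the posterior mean compared to $\frac1t\sum B_{Z,k}$) to obtain $\Omega(1/t)$ once $t\ge cM^4$. You instead (i) use a linear tilt $\frac12(1+12\theta(v-1/2))$ as the perturbation, which keeps $\nu_\theta\ge 1/4$ uniformly and hence pins the per-sample Fisher information at an $M$-free constant, and (ii) invoke Van Trees directly. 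This sidesteps both the Bernoulli embedding and the explicit posterior computation.

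\textbf{What each approach buys.} Your route is considerably shorter and more transparent: the decoupling of curvature (the spike) from identifiability (the tilt) is exactly the mechanism that makes $I(\theta)=O(1)$ while the prior information is $J(\pi)=\Theta(M^2)$, so Van Trees yields $\E_\pi\E_\theta[(\tilde P_t-\bar\nu_\theta)^2]\ge c/(M^2+t)$ immediately, and summing gives $\Omega\brb{M\log(T/M^2)}=\Omega(M\log T)$ under $T\ge c_2 M^4$. The paper's construction, on the other hand, is designed with the two-bit lower bound (\Cref{t:lower-bound-two-bit}) in mind: placing the perturbation on the disjoint interval $[1/7,2/7]$ creates a ``revealing action'' structure (the only informative prices are $\Omega(1)$-suboptimal), which is essential there but unnecessary for the full-feedback case. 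Your linear tilt would not directly recycle for the two-bit argument, but for \Cref{t:lower-bound-full-M} alone it is the cleaner proof.

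\textbf{Minor points to firm up.} The projection step $\tilde p=\Pi_S(p)$ works because $p\mapsto \rho(\nu_\theta)(\bar\nu_\theta)-\rho(\nu_\theta)(p)=\int_p^{\bar\nu_\theta}(m(\lambda)-m(p))\dif\lambda$ is monotone in $|p-\bar\nu_\theta|$ (by monotonicity of $m$), so the regret at $p$ dominates that at $\tilde p\in S$; you should state this explicitly. Also ensure your chosen $\psi$ satisfies $\psi\le 1$ (e.g., triangular or raised-cosine) so that $\|\nu_\theta\|_\infty\le M/2+1\le M$, and that $\psi\ge 1/2$ on a fixed neighborhood of $0$ large enough to contain $\{M(\bar\nu_\theta-1/2):|\theta|\le r_0\}\subset[-1/16,1/16]$, so that $\nu_\theta\ge M/4$ on $S$.
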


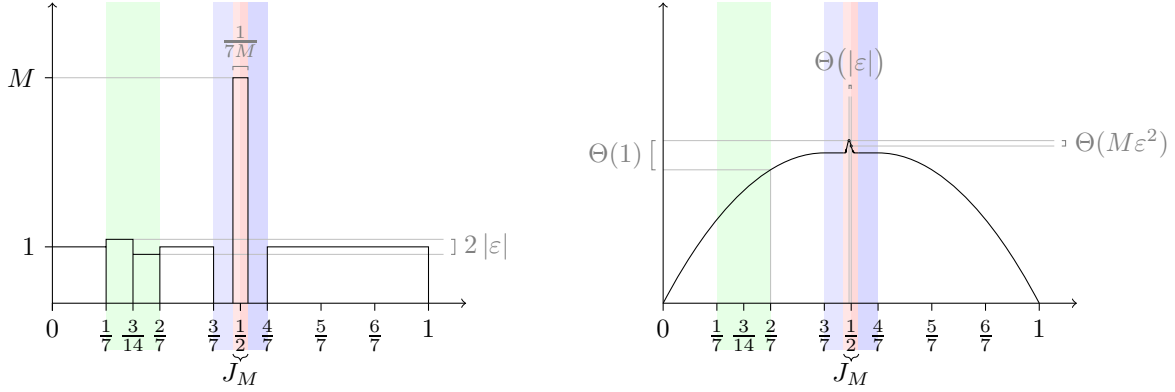
\begin{figure*}
    \centering
    \begin{tikzpicture}[scale=5]

    \fill[green!10!white] (1/7,-0.125) rectangle (2/7,0.8);
    \fill[blue!10!white] (3/7,-0.125) rectangle (0.48,0.8);
    \fill[red!10!white] (0.48,-0.125) rectangle (0.5,0.8);
    \fill[red!15!white] (0.5,-0.125) rectangle (0.52,0.8);
    \fill[blue!15!white] (0.52,-0.125) rectangle (4/7,0.8);
    
    \draw[gray] (0.48, 0.62) -- (0.48, 0.63) -- (0.52, 0.63) -- (0.52, 0.62)
        (1.06, 0.13) -- (1.07, 0.13) -- (1.07, 0.17) -- (1.06, 0.17)
        (1.07,0.15) node[right]{$2\labs{\e}$}
        (0.5,0.63) node[above]{$\frac1{7M}$}
    ;
    \draw[lightgray, very thin] (1.04,0.13) -- (2/7,0.13)
        (1.04,0.17) -- (3/14,0.17)
        (0,0.6) -- (0.48,0.6)
    ;
    
    \draw 
        (1/7,0) -- (1/7,0.17) -- (3/14,0.17) -- (3/14,0)
        (0,0.15) -- (1/7,0.15)
        (3/14,0.13) -- (2/7,0.13)
        (2/7,0) -- (2/7,0.15) -- (3/7,0.15) -- (3/7,0)
        (0.52, 0) -- (0.52,0.6) -- (0.48,0.6) -- (0.48,0)
        (0,-0.02) -- (0,0)
        (4/7,0) -- (4/7,0.15) -- (1, 0.15) -- (1,0)
        (0.5,-0.02) -- (0.5,0)
    ;
    
    \draw (0,0) -- (0,-0.02) node[below]{$0$}
        (1/7,0) -- (1/7,-0.02) node[below]{$\frac17$}
        (3/14,0) -- (3/14,-0.02) node[below]{$\frac{3}{14}$}
        (2/7,0) -- (2/7,-0.02) node[below]{$\frac27$}
        (3/7,0) -- (3/7,-0.02) node[below]{$\frac37$}
        (0.5,0) -- (0.5,-0.02) node[below]{$\frac12$}
        (4/7,0) -- (4/7,-0.02) node[below]{$\frac47$}
        (5/7,0) -- (5/7,-0.02) node[below]{$\frac57$}
        (6/7,0) -- (6/7,-0.02) node[below]{$\frac67$}
        (1,0) -- (1,-0.02) node[below]{$1$}
        (0,0.15) -- (-0.02,0.15) node[left]{$1$}
        (0,0.6) -- (-0.02,0.6) node[left]{$M$}
    ;

    \draw[<->] (0,0.8) -- (0,0) -- (1.1,0);

    \draw [decorate, decoration = {brace}] (0.52,-0.135) -- node[below]{$J_M$} (0.48,-0.135);
    \end{tikzpicture}
    \qquad
    \begin{tikzpicture}[scale=5]
    
    \fill[green!10!white] (1/7,-0.125) rectangle (2/7,0.8);
    \fill[blue!10!white] (3/7,-0.125) rectangle (0.48,0.8);
    \fill[red!10!white] (0.48,-0.125) rectangle (0.5,0.8);
    \fill[red!15!white] (0.5,-0.125) rectangle (0.52,0.8);
    \fill[blue!15!white] (0.52,-0.125) rectangle (4/7,0.8);

    \draw[gray] (1.06, 0.43269) -- (1.07, 0.43269) -- (1.07, 0.418311) -- (1.06, 0.418311)
        (1.07, {0.418311/2 + 0.43269/2})node[right]{$\Theta(M\e^2)$}
        (-0.02, 0.355) -- (-0.03, 0.355) -- (-0.03, 0.43269) -- (-0.02, 0.43269)
        (-0.03, {0.355/2 + 0.43269/2}) node[left]{$\Theta(1)$}
        (0.49429,0.57) -- (0.49429,0.58) -- (0.5,0.58) -- (0.5,0.57)
        ({0.49429/2 + 0.5/2}, 0.57) node[above]{$\Theta\brb{\labs{\e}}$}
    ;
    
    \draw[lightgray, very thin] 
        (1.04,0.418311) -- (0.5,0.418311)
        (1.04, 0.43269) -- (0.49429, 0.43269)
        (0,0.43269) -- (0.49429,0.43269)
        (0,0.355) -- (2/7, 0.355)
        (2/7,0) -- (2/7, 0.355)
        (0.5,0) -- (0.5,0.55)
        (0.49429,0) -- (0.49429,0.55)
    ;

    \draw[domain = 0.48:0.52, samples = 350] plot (\x, { 0.4 + 300*((\x-0.48)/0.04)^5 * ( 1 - (\x-0.48)/0.04)^9 });

    \draw (0.48, 0.4) -- (3/7, 0.4) parabola (0,0)
        (0.52, 0.4) -- (4/7, 0.4) parabola (1,0)
    ;
    
    \draw (0,0) -- (0,-0.02) node[below]{$0$}
        (1/7,0) -- (1/7,-0.02) node[below]{$\frac17$}
        (3/14,0) -- (3/14,-0.02) node[below]{$\frac{3}{14}$}
        (2/7,0) -- (2/7,-0.02) node[below]{$\frac27$}
        (3/7,0) -- (3/7,-0.02) node[below]{$\frac37$}
        (0.5,0) -- (0.5,-0.02) node[below]{$\frac12$}
        (4/7,0) -- (4/7,-0.02) node[below]{$\frac47$}
        (5/7,0) -- (5/7,-0.02) node[below]{$\frac57$}
        (6/7,0) -- (6/7,-0.02) node[below]{$\frac67$}
        (1,0) -- (1,-0.02) node[below]{$1$}
    ;

    \draw[very thin, fill=white] 
        (0.5,0.418311) circle (0.075pt)
        (0.49429, 0.43269) circle (0.075pt)
    ;

    \draw[<->] (0,0.8) -- (0,0) -- (1.1,0);

    \draw [decorate, decoration = {brace}] (0.52,-0.135) -- node[below]{$J_M$} (0.48,-0.135);
    \end{tikzpicture}
    \caption{On the left, the density $f_\e$ of a ``hard'' instance used to prove the lower bounds in \Cref{t:lower-bound-full-M,t:lower-bound-two-bit}. 
    A base uniform distribution is warped in the intervals $[1/7,2/7]$ (green) and $[3/7,4/7]$ (blue+red). 
    The density on $[1/7,2/7]$ is split into two uneven parts, differing by $\e$ from the original.
    The mass on $[3/7,4/7]$ is concentrated in a small set $J_M$ of size $\Theta(1/M)$ around $1/2$. 
    The corresponding gain from trade, on the right, has a smooth spike of height $\Theta(M\labs\e^2)$ situated in $J_M$, at a distance $\Theta(\labs\e)$ from $1/2$. 
    When $\e<0$ (resp., $\e>0$), the spike is left (resp., right) of $1/2$, and posting $1/2$ is better than posting any price after (resp., before) $1/2$. In the two-bit feedback lower bound, the only way to gather usable feedback is to post prices in $[1/7,2/7]$, which give rewards $\Theta(1)$-away from the optimal one.
    \label{f:lower-bound-density}}
\end{figure*}

\begin{proof}
    Given that we are in a stochastic i.i.d.\ setting, we can restrict this proof to deterministic algorithms without loss of generality.
    Let $M\ge 2$, 
    $J_M \coloneqq \bsb{\frac12 - \frac1{14M}, \frac12 + \frac1{14M}}$,
    $
        f
    \coloneqq 
        \I_{\lsb{0,\frac37}} 
        + M \I_{J_M}
        + \I_{\lsb{\frac47, 1}}
    $,
    and, for any $\e \in [-1,1]$, 
    $
        g_\e 
    \coloneqq 
        - \e \I_{\lsb{\frac17,\frac3{14}}} 
        + \e \I_{ \left( \frac3{14},\frac27 \right] }
    $
    and $f_\e \coloneqq f + g_\e$ (see \cref{f:lower-bound-density}, left).        For any $\e \in [-1,1]$, note that $0 \le f_\e \le M$ and $\int_0^1 f_{\e}(x) \dif x = 1$, hence $f_{\e}$ is a valid density on $[0,1]$ bounded by $M$, and we will denote the corresponding probability measure by $\nu_{\e}$.
    Consider for each $q \in [0,1]$, an i.i.d.\ sequence $(B_{q,t})_{t \in \N}$ of Bernoulli random variables of parameter $q$, an i.i.d.\ sequence $(\tilde{B}_t)_{t \in \N}$ of Bernoulli random variables of parameter $1/7$, an i.i.d. sequence $(U_t)_{t \in \N}$ of uniform random variables on $[0,1]$, a uniform random variable $E$ on $[-\e_M,\e_M]$ where $\e_M := \frac{7}{M}$, such that $\lrb{(B_{q,t})_{t \in \N, q \in [0,1]} , (\tilde{B}_t)_{t \in \N}, (U_t)_{t \in \N}, E}$ is an independent family. 
    Let $\varphi \colon [0,1] \to [0,1]$ be such that, if $U$ is a uniform random variable on $[0,1]$, then the distribution of $\varphi(U)$ has density $\frac{7}{6}\cdot f \cdot \I_{[0,1]\m [\fracc{1}{7},\fracc{2}{7}]}$ (which exists by the Skorokhod representation theorem \cite[Section 17.3]{williams1991probability}).
    For each $\e \in [-1,1]$ and $t \in \N$, define
    \begin{equation}
        V_{\e,t}
    \coloneqq
        \lrb{ \frac{2+U_t}{14}(1-B_{\frac{1+\e}{2},t}) + \frac{3+U_t}{14}B_{\frac{1+\e}{2},t}}\tilde{B}_t + \varphi(U_t)(1-\tilde{B}_t).
    \label{e:representation_of_V}
    \end{equation}
    Straightforward computations show that, for each $\e \in [-1,1]$ the sequence $(V_{\e,t})_{t \in \N}$ is i.i.d.\  with common distribution given by $\nu_\e$, and this sequence is independent of $E$.
    For any $\e\in[-1,1]$, $p\in[0,1]$, and $t\in \N$, let $\GFT_{\e,t}(p) \coloneqq \gft(p,V_{\e,2t-1}, V_{\e,2t})$ (for a qualitative representation of its expectation, see \cref{f:lower-bound-density}, right).
    For any $\e \in [-1,1]$ and $t \in \N$, a direct computation shows that $\bar\nu_\e = \E[V_{\e,t}] = \frac12 + \frac{\e}{196}$.
    By \Cref{l:key,l:repre}, we have, for all $\e \in [-1,1],t \in \N$, and $p \in [0,1]$,
    \[
        \E\bsb{ \GFT_{\e,t}(p) }
    =
        2\int_0^p\int_0^\lambda f_\e(s)\dif s \dif \lambda + 2 (\bar\nu_\e - p)\int_0^p f_\e(s) \dif s \;,
    \]
    which, together with the fundamental theorem of calculus ---\cite[Theorem 14.16]{bass2013real}, noting that $p\mapsto\E\bsb{ \GFT_{\e,t}(p) }$ is absolutely continuous with derivative defined a.e.\ by $p\mapsto 2(\bar\nu_\e-p) f_\e(p)$--- yields, for any $p\in J_M$,
    \begin{equation}
        \E\bsb{ \GFT_{\e,t}(\bar\nu_\e) } - \E\bsb{ \GFT_{\e,t}(p) }
    =
        M |\bar\nu_\e - p|^2 \;.
        \label{e:proof-lb}
    \end{equation}
    Note also that for all $\e \in [-\e_M,\e_M]$, $t\in \N$, and $p\in [0,1]\setminus J_M$, 
    \begin{equation}
        \E\bsb{ \GFT_{\e,t}(p) } 
    \le
        \E\lsb{ \GFT_{\e,t}\lrb{\fracc12} } \;.
    \label{e:reduction_to_J_M}
    \end{equation}
    Fix any arbitrary deterministic algorithm for the full feedback setting $(\tilde{\alpha}_t)_{t\in \N}$, i.e., a sequence of functions $\tilde{\alpha}_t \colon \brb{[0,1]\times[0,1]}^{t-1} \to [0,1]$ mapping past feedback into prices (with the convention that $\tilde{\alpha}_1$ is just a number in $[0,1]$). For each $t \in \N$, define $\alpha_t \colon \brb{[0,1]\times[0,1]}^{t-1} \to J_M$ equal to $\tilde{\alpha}_t$ whenever $\tilde{\alpha}_t$ takes values in $J_M$, and equal to $1/2$ otherwise. 
    Defining $Z\coloneqq \frac{1+E}{2}$, and $R_T^\nu$ as the regret of the algorithm $(\tilde{\alpha}_t)_{t \in \N}$ at time $T$ when the underlying sequence of traders' valuations follows the distribution $\nu$, we have that the worst-case regret $\sup_{\nu \in \cD_M} R_T^\nu$ is lower bounded by%
    {%
    \allowdisplaybreaks%
    \begin{align*}
    &
        \sup_{\e\in[-\e_M,\e_M]} 
        \sum_{t=1}^T \E\Bsb{ \GFT_{\e,t}(\bar\nu_\e) - \GFT_{\e,t}\brb{ \tilde{\alpha}_t( V_{\e,1}, \dots, V_{\e,2(t-1)} ) } }
    \\
    &
    \quad
    \overset{(\ref{e:reduction_to_J_M})}\ge
        \sup_{\e\in[-\e_M,\e_M]} 
        \sum_{t=1}^T \E\Bsb{ \GFT_{\e,t}(\bar\nu_\e) - \GFT_{\e,t}\brb{ \alpha_t( V_{\e,1}, \dots, V_{\e,2(t-1)} ) } }
    \\
    &
    \quad
    \overset{\spadesuit}=
        M \sup_{\e\in[-\e_M,\e_M]} 
        \sum_{t=1}^T \E\Bsb{ \babs{ \bar\nu_\e - \alpha_t(V_{\e,1}, \dots, V_{\e, 2(t-1)}) }^2 }
    \\
    &
    \quad
    \ge
        M \sum_{t=1}^T \E\Bsb{ \babs{ \bar\nu_E - \alpha_t(V_{E,1}, \dots, V_{E, 2(t-1)}) }^2 }
    \\
    &
    \quad
    \overset{\varheartsuit}\ge
        M \sum_{t=1}^T \E\Bsb{ \babs{ \bar\nu_E - \E[\bar\nu_E \mid V_{E,1}, \dots, V_{E,2(t-1)} ] }^2 }
    \\
    &
    \quad
    =
        \frac{M}{196} \sum_{t=1}^T \E\Bsb{ \babs{ E - \E[E \mid V_{E,1}, \dots, V_{E,2(t-1)} ] }^2 }
    \\
    &
    \quad
    \overset{\vardiamondsuit}\ge
        \frac{M}{196} \sum_{t=1}^T \E\Bsb{ \babs{ E - \E[E \mid B_{\frac{1+E}{2},1},\dots,B_{\frac{1+E}{2},2(t-1)}] }^2 }
    =
    \\
    &
    \quad
    =
        \frac{M}{98} \sum_{t=1}^T \E\Bsb{ \babs{ Z - \E[Z \mid B_{Z,1},\dots,B_{Z,2(t-1)}] }^2 }
    \end{align*}
    }%
    where $\spadesuit$ follows from \eqref{e:proof-lb} and the fact that $\alpha_t$ takes values in $J_M$;
    $\varheartsuit$ from the fact that the minimizer of the $L^2(\Pb)$-distance from $\bar\nu_E$ in $\sigma(V_{E,1}, \dots, V_{E,2(t-1)})$ is $\E[\bar\nu_E \mid V_{E,1}, \dots, V_{E,2(t-1)} ]$ (see, e.g., \cite[Section 9.4]{williams1991probability});
    $\vardiamondsuit$ follows from the fact that, by \cref{e:representation_of_V} and the independence of $E$ from $\lrb{(B_{q,t})_{t \in \N, q \in [0,1]} , (\tilde{B}_t)_{t \in \N}, (U_t)_{t \in \N}}$, the conditional expectation $\E[E \mid V_{E,1}, \dots, V_{E,2(t-1)} ]$ is a measurable function of $B_{\frac{1+E}{2},1},\dots,B_{\frac{1+E}{2},2(t-1)}$, together with the same observation made in $\varheartsuit$ about the minimization of $L^2(\Pb)$ distance.

    Finally, the general term of this last sum is the expected squared distance between the random parameter (drawn uniformly over $[ \fracc{(1 - \e_M)}{2}, \fracc{(1 + \e_M)}{2} ]$) of an i.i.d.\ sequence of Bernoulli random variables and the conditional expectation of this random parameter given $2(t-1)$ independent realizations of these Bernoullis. 
    A probabilistic argument shows that there exist two universal constants $\tilde{c}, c_2>0$ such that, for all $t \ge c_2 M^4$, 
    \begin{equation}
        \label{e:probabilistic}
        \E\Bsb{ \babs{ Z - \E[Z \mid B_{Z,1}, \dots, B_{Z,2(t-1)} ] }^2 }
    \ge
        \tilde c\frac{1}{t-1} \;.
    \end{equation}
    At a high level, this is because, in an event of probability $\Omega(1)$, if $t$ is large enough, the conditional expectation $\E[Z \mid B_{Z,1}, \dots, B_{Z,2(t-1)} ]$ is very close to the empirical average $\frac{1}{2(t-1)}\sum_{s=1}^{2(t-1)} B_{Z,s}$, whose expected squared distance from $Z$ is $\Omega\brb{ 1/(t-1)}$. 
    For a formal proof \eqref{e:probabilistic} with explicit constants, see the supplementary material.
    Summing over $t$ and putting everything together gives the result.
\end{proof}

\section{Two-Bit Feedback}
We now study the two-bit feedback case (corresponding to posted price mechanisms) under the bounded density assumption.

\subsection{Upper bound}
Motivated once more by the intuition provided by \Cref{t:mysticus}, we begin this section by giving a way to approximate the expected value of traders' valuations on the basis of the two-bit feedback and quantify the approximation power of this strategy.
\begin{lemma}
\label{l:approx}
    For any random variable $X$ on $[0,1]$ and any $T_0 \in \N$,
\[
    0
\le
    \E[X] - \frac{1}{T_0} \sum_{t=1}^{T_0} \Pb \lsb{ \frac{t}{T_0} \le X }
\le
    \frac{1}{T_0}
\]
\end{lemma}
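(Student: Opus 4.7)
The plan is to exploit the layer-cake formula together with a Riemann-sum sandwich. Since $X$ takes values in $[0,1]$, the tail integral representation gives
\[
    \E[X] = \int_0^1 \Pb[X \ge s] \dif s,
\]
and the quantity $\frac{1}{T_0} \sum_{t=1}^{T_0} \Pb[X \ge t/T_0]$ is exactly the right Riemann sum of this integral associated with the uniform partition $0 < 1/T_0 < 2/T_0 < \cdots < 1$.

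Next I would invoke the monotonicity of the survival function: $s \mapsto \Pb[X \ge s]$ is non-increasing. On each subinterval $\bsb{(t-1)/T_0, t/T_0}$ this yields
\[
    \frac{1}{T_0}\Pb\lsb{X \ge \tfrac{t}{T_0}} \le \int_{(t-1)/T_0}^{t/T_0} \Pb[X \ge s]\dif s \le \frac{1}{T_0}\Pb\lsb{X \ge \tfrac{t-1}{T_0}}.
\]
Summing over $t = 1, \ldots, T_0$ sandwiches $\E[X]$ between the right and left Riemann sums. The left inequality immediately gives the claimed lower bound $\E[X] - \frac{1}{T_0}\sum_{t=1}^{T_0}\Pb[X\ge t/T_0] \ge 0$.

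For the upper bound, the gap between the left and right Riemann sums telescopes:
\[
    \frac{1}{T_0}\sum_{t=1}^{T_0}\Pb\lsb{X\ge\tfrac{t-1}{T_0}} - \frac{1}{T_0}\sum_{t=1}^{T_0}\Pb\lsb{X\ge\tfrac{t}{T_0}} = \frac{1}{T_0}\brb{\Pb[X\ge 0] - \Pb[X \ge 1]} \le \frac{1}{T_0}.
\]
Since $\E[X]$ lies between these two sums, the excess $\E[X] - \frac{1}{T_0}\sum_{t=1}^{T_0}\Pb[X\ge t/T_0]$ is bounded above by this telescoped gap, finishing the proof. There is no real obstacle here: the argument is a clean application of the layer-cake identity and the standard monotone-function Riemann-sum error bound, with the only mildly delicate point being the choice of $\ge$ versus $>$ in the tail (either works because the integrand changes on a set of measure zero).
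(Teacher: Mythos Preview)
Your proof is correct and follows essentially the same approach as the paper's: both use the tail-integral representation $\E[X]=\int_0^1 \Pb[\lambda\le X]\dif\lambda$, sandwich it between the left and right Riemann sums on the uniform partition via monotonicity of the survival function, and telescope the difference to get the $1/T_0$ bound. The only cosmetic difference is that the paper writes the telescoped gap as $\sum_t \Pb\bsb{(t-1)/T_0 \le X < t/T_0} = \Pb[0\le X<1]\le 1$, whereas you write it as $\Pb[X\ge 0]-\Pb[X\ge 1]\le 1$; these are the same quantity.
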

\begin{proof}
Notice that
\begin{align*}
&
    \frac{1}{T_0} \sum_{t=1}^{T_0} \Pb\lsb{  \frac{t}{T_0} \le X }
=
    \sum_{t=1}^{T_0} \int_{\frac{t-1}{T_0}}^{\frac{t}{T_0}}  \Pb\lsb{ \frac{t}{T_0} \le X  } \dif \lambda
\\
&
\qquad
\le
    \sum_{t=1}^{T_0} \int_{\frac{t-1}{T_0}}^{\frac{t}{T_0}}  \Pb\lsb{ \lambda \le X } \dif \lambda
\\
&
\qquad
\le
    \sum_{t=1}^{T_0} \int_{\frac{t-1}{T_0}}^{\frac{t}{T_0}}  \Pb\lsb{ \frac{t-1}{T_0} \le X  }
=
    \frac{1}{T_0} \sum_{t=1}^{T_0}  \Pb\lsb{ \frac{t-1}{T_0} \le X  } \dif \lambda\;.
\end{align*}
Since by Fubini's Theorem,
\[
    \E[X]
=
    \int_0^1\Pb[\lambda \le X] \dif \lambda
=
    \sum_{t=1}^{T_0} \int_{\frac{t-1}{T_0}}^{\frac{t}{T_0}} \Pb[\lambda\le X] \dif \lambda \;,
\]
we obtain
\begin{align*}
    0
&
\le
    T_0 \E[X] - \sum_{t=1}^{T_0}  \Pb\lsb{ \frac{t}{T_0} \le X  }
\le
    \sum_{t=1}^{T_0} \lrb{ \Pb\lsb{ \frac{t-1}{T_0} \le X  } - \Pb\lsb{ \frac{t}{T_0} \le X  } }
\\
&
=
    \sum_{t=1}^{T_0} \Pb\lsb{ \frac{t-1}{T_0} \le X < \frac{t}{T_0} }
=
    \Pb[0\le X<1]
\le
    1 \;. \qedhere
\end{align*}
\end{proof}
The previous lemma suggests the design of a simple Explore-then-Commit (ETC) strategy (\Cref{a:etc}), where the learner spends an initial phase of length $T_0$ trying to estimate $\E[V]$ and then posts this estimate every round up to the time horizon $T$.
\begin{algorithm}
\textbf{Input:} Exploration time $T_0 \in \N$\;
\For
{%
    time $t=1,2,\ldots, T_0$
}
{
    Post $P_t \coloneqq t/T_0$\;
    Receive feedback $\I\{P_t \le V_{2t-1}\}$, $\I\{P_t \le V_{2t}\}$\;
}
\For
{%
    time $t=T_0+1,T_0+2,\ldots$
}
{
    Post $P_t \coloneqq \frac{1}{2T_0} \sum_{s=1}^{T_0} \brb{ \I \{P_s \le V_{2s-1} \} + \I \{P_s \le V_{2s}\}}$\;
}
\caption{Explore-then-Commit (ETC)}
\label{a:etc}
\end{algorithm}
ETC algorithms are of great practical importance due to their easy implementability and interpretability.
This usually comes at a cost of performance. 
As the following result (together with \Cref{t:lower-bound-two-bit}, in the next section) will show, our ECT algorithm is free of this flaw. 
\begin{theorem}
    \label{t:ETC}
    If $\nu$ has density bounded by some constant $M>0$, then the regret of ETC satisfies, for all time horizons $T$,
\[
    R_T
\le
    T_0 - \frac12 + M (T-T_0) \lrb{ \frac{2}{T_0^2} + \frac{1}{T_0} }
\]
Tuning the parameter $T_0\coloneqq\bce{\sqrt{MT}}$ yields
\[
    R_T
\le
    2.5 + 2\sqrt{MT} \;.
\]    
\end{theorem}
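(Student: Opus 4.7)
The plan is to split the regret at time $T_0$ into an exploration cost and a commit cost, bound each, and then tune $T_0$. For the exploration phase $t \le T_0$, each instantaneous regret is trivially bounded by $1$ (since $\GFT_t \in [0,1]$), giving a cumulative exploration cost of at most $T_0$, which easily subsumes the $T_0 - 1/2$ constant in the statement.

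For the commit phase $t > T_0$, the posted price
\[
    \hat P \coloneqq \frac{1}{2T_0}\sum_{s=1}^{T_0}\brb{ \I\{s/T_0 \le V_{2s-1}\} + \I\{s/T_0 \le V_{2s}\} }
\]
is fixed for all $t > T_0$ and measurable with respect to the exploration-phase valuations $V_1,\dots,V_{2T_0}$, hence independent of the round-$t$ pair $(V_{2t-1},V_{2t})$. The Freezing Lemma combined with \Cref{t:mysticus} (exactly as in the proof of \Cref{t:ftm}) bounds each per-round regret in this phase by $M \cdot \E\bsb{\babs{\hat P - \E[V]}^2}$, so the total commit contribution is at most $M(T-T_0) \cdot \E\bsb{\babs{\hat P - \E[V]}^2}$.

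The core calculation is then a bias-variance decomposition of $\hat P$. Applying \Cref{l:approx} to $X = V$ directly yields $\babs{\E[\hat P] - \E[V]} \le 1/T_0$, so the squared bias is at most $1/T_0^2$. For the variance, the $2T_0$ Bernoulli indicators defining $\hat P$ are mutually independent (the exploration prices $s/T_0$ are deterministic and the valuations are i.i.d.), hence $\var(\hat P) \le \frac{1}{(2T_0)^2}\cdot 2T_0 \cdot \frac14 = \frac{1}{8T_0}$. Combining gives $\E\bsb{\babs{\hat P - \E[V]}^2} \le 1/T_0^2 + 1/(8T_0) \le 2/T_0^2 + 1/T_0$. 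Substituting this into the commit-phase bound and choosing $T_0 = \bce{\sqrt{MT}}$ yields $R_T \le 2.5 + 2\sqrt{MT}$ via elementary estimates ($T_0 \le 1 + \sqrt{MT}$ and $T_0 \ge \sqrt{MT}$). No step here is a real obstacle: once \Cref{t:mysticus} reduces the task to estimating $\E[V]$ in squared loss and \Cref{l:approx} certifies that a $1/T_0$-grid of two-bit queries achieves $O(1/T_0)$-bias, the remainder is routine mean-estimation arithmetic.
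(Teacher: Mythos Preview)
Your approach matches the paper's: split at $T_0$, use the Freezing Lemma together with \Cref{t:mysticus} to reduce the commit cost to $M\,\E\bigl[|\hat P-\E[V]|^2\bigr]$, then control the latter as squared bias (via \Cref{l:approx}) plus variance. The one technical difference is that the paper bounds $\E\bigl[|p_0-\hat P|^2\bigr]$ (where $p_0=\E[\hat P]$) via Hoeffding and a tail integral, obtaining $1/(2T_0)$, and then applies $(a+b)^2\le 2a^2+2b^2$; you instead compute $\var(\hat P)\le 1/(8T_0)$ directly from independence of the $2T_0$ Bernoulli indicators and use the exact bias--variance identity. Your route is cleaner and sharper, though you then relax to the paper's $2/T_0^2+1/T_0$ anyway.

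One small slip on constants: bounding each exploration round by $1$ gives a total of $T_0$, which does \emph{not} imply the stated $T_0-\tfrac12$ (it is $\tfrac12$ larger, not smaller), and after your relaxation the tuned bound comes out as $3+2\sqrt{MT}$ rather than $2.5+2\sqrt{MT}$. Two one-line fixes are available. Either observe that
\[
\E\bigl[\GFT_t(p^\star)\bigr]\le \E\bigl[|V_{2t-1}-V_{2t}|\bigr]=2\int_0^1 F(\lambda)\bigl(1-F(\lambda)\bigr)\dif\lambda\le \tfrac12,
\]
so each exploration round costs at most $\tfrac12$ and the exploration total is $T_0/2\le T_0-\tfrac12$; or simply keep your sharper commit estimate $1/T_0^2+1/(8T_0)$ without relaxing, which after tuning yields a bound well below $2.5+2\sqrt{MT}$ even with the cruder exploration cost $T_0$.
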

\begin{proof}
Fix any $T_0\in\N$ and let $p_0 \coloneqq \frac{1}{T_0}\sum_{s=1}^{T_0} \Pb\lsb{ \frac{s}{T_0} \le V }$.
By Hoeffding's inequality and Fubini's theorem, we get
\begin{align*}
    \E \lsb{ \labs{ p_0  - P_{T_0+1} }^2 }
&
=
    \int_0^{+\iop} \Pb \lsb{ \labs{ p_0  - P_{T_0+1} }^2 \ge \e } \dif \e
\\
&
\le
    \int_0^{+\iop} 2 \exp(-4 \e T_0 ) \dif \e
=
    \frac{1}{2T_0} \;,
\end{align*}
from which, leveraging also \cref{l:approx}, it follows that
\begin{align*}
    \E\Bsb{ \babs{ \E[V] - P_{T_0+1} }^2 }
&
\le
    2 \labs{ \E[V] - p_0  }^2 + 2 \E\Bsb{ \labs{  p_0 - P_{T_0+1} }^2 }
\le
    \frac{2}{T_0^2} + \frac{1}{T_0}\;.
\end{align*}
Proceeding as in the proof of \Cref{t:ftm}, we obtain, for all $t\in \N$,
\[
    \E\Bsb{ \GFT_t\brb{ \E[V] } - \GFT_t(P_t) }
\le
    M \E\Bsb{ \babs{ \E[V] - P_t }^2 } \;.
\]
Putting everything together, we get, for all $T \ge T_0+1$
\begin{align*}
&
    R_T - T_0 + \frac12
\le
    \sum_{t=T_0+1}^T \E \Bsb{ \GFT_t\brb{ \E[V] } - \GFT_t(P_t) }
\\
&
\qquad
\le
    M \sum_{t=T_0+1}^T \E \Bsb{ \babs{ \E[V] - P_t }^2 }
=
    M \sum_{t=T_0+1}^T \E \Bsb{ \babs{ \E[V] - P_{T_0+1} }^2 }
\\
&
\qquad
\le
    M (T-T_0) \lrb{ \frac{2}{T_0^2} + \frac{1}{T_0} } \;.
\end{align*}

Substituting the selected parameters in the final expression yields the second part of the result.
\end{proof}

\subsection{Lower bound}
In this section, we prove the optimality of our ETC algorithm by showing a matching $\sqrt{M T}$ lower bound.
For a high-level overview of its proof, we refer the reader back to \Cref{s:tech-and-challenges}.

\begin{theorem}
    \label{t:lower-bound-two-bit}
    There exists two numerical constants $c_1, c_2>0$ such that, for any $M\ge 2$ and any time horizon $T \ge c_2 M^3$, the worst-case regret of any algorithm satisfies
    \[
        \sup_{\nu} R_T^\nu
    \ge
       c_1 \sqrt{ M T } \;,
    \]
    where $R_T^\nu$ is the regret at time $T$ of the algorithm when the underlying i.i.d.\ sequence of traders' valuations follows the distribution $\nu$, and $\cD_M$ is the set of all distributions $\nu$ with density bounded by $M$.
\end{theorem}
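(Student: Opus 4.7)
The plan is to adapt the two-point Le Cam construction from \Cref{t:lower-bound-full-M}, restricting attention to the pair $\nu_{+\e_M},\nu_{-\e_M}$ with $\e_M$ of order $(MT)^{-1/4}$ (to be tuned at the end). The crucial new structural observation is that, under two-bit feedback, the Bernoulli feedback $\I\{P_t\le V_t\}$ has a distribution that is \emph{independent of $\e$} whenever $P_t\notin[1/7,2/7]$: indeed, the perturbation $g_\e$ is supported on $[1/7,2/7]$ with $\int g_\e=0$, so $\int_{P_t}^1 f_\e=\int_{P_t}^1 f$ does not depend on $\e$ outside that interval. Only prices in $[1/7,2/7]$ act as ``revealing actions'', and each such play carries an $\Omega(1)$ regret cost computable from \Cref{l:repre,l:key} and the explicit form of $f_\e$ (the gap being at least $\rhot(\nu_0)(1/2)-\rhot(\nu_0)(2/7)=2/49+O(1/M)$).

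Using this, I would first quantify the per-round regret of \emph{non-revealing} plays. Inside $J_M$, the equality $\rho(\nu_\e)(\bar\nu_\e)-\rho(\nu_\e)(p)=M\babs{\bar\nu_\e-p}^2$ from the proof of \Cref{t:lower-bound-full-M} gives a regret of at least $M(\e_M/196)^2$ whenever $P_t$ lies on the wrong side of $1/2$. Outside $J_M$ and outside $[1/7,2/7]$, the density is $0$ on $[3/7,4/7]\m J_M$ (so $\rho$ is constant there) and is $1$ elsewhere, so a direct computation shows the per-round regret is at least $1/(196M)+O(\e_M)\ge M(\e_M/196)^2$, using $\e_M\le 1/M$ (which follows from $\e_M\asymp(MT)^{-1/4}$ and the hypothesis $T\ge c_2 M^3$). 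Averaging over the symmetric pair $\{+\e_M,-\e_M\}$, every non-revealing round contributes at least $M(\e_M/196)^2$ to $\tfrac12(R_T^{+\e_M}+R_T^{-\e_M})$, while every revealing round contributes at least $c_{\mathrm{rev}}=\Omega(1)$.

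For the information-theoretic part, I would apply the KL chain rule: only revealing rounds contribute, and the per-round KL between the two Bernoulli feedback bits is $O(\e_M^2)$ since the corresponding parameters differ by $O(\e_M)$. Hence $\mathrm{KL}(\Pb_{+\e_M},\Pb_{-\e_M})\le C_1\e_M^2\,\E[N_{\mathrm{rev}}]$, where $N_{\mathrm{rev}}\coloneqq\sum_t\I\{P_t\in[1/7,2/7]\}$, so by Pinsker the total variation between the two joint laws stays below $1/2$ as long as $\E[N_{\mathrm{rev}}]\le c/\e_M^2$. A standard case split then concludes: either $\E[N_{\mathrm{rev}}]\ge c/\e_M^2$, giving revealing-cost regret $\Omega(1/\e_M^2)$, or the TV bound forces a constant fraction of non-revealing rounds to fall on the wrong side of $1/2$ under one of the two instances, giving non-revealing regret $\Omega(M\e_M^2\, T)$. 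Setting $\e_M\asymp(MT)^{-1/4}$ balances both cases to $\Theta(\sqrt{MT})$.

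The hard part will be the lower bound on the per-round regret at prices that are neither revealing nor inside $J_M$. \Cref{l:key} only gives an \emph{upper} bound $M\babs{\bar\nu_\e-p}^2$ on the regret outside $J_M$, so one must directly compute $\rho(\nu_\e)$ using the piecewise structure of $f_\e$ and verify that the ``spike bonus'' of height $\Theta(1/M)$ contributed by the $M$-density on $J_M$ is actually lost at any $p\notin J_M$, uniformly in $\e\in[-\e_M,\e_M]$. Once this uniform lower bound is in hand, the Le Cam/KL machinery proceeds along standard lines, and the constants $c_1,c_2$ are extracted from the balancing step.
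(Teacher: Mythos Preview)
Your proposal is correct and follows essentially the same revealing-action strategy as the paper: same two-point family $\{\nu_{+\e},\nu_{-\e}\}$ inherited from \Cref{t:lower-bound-full-M}, same observation that only prices in $[1/7,2/7]$ carry information about the sign of $\e$ under two-bit feedback, same $\Omega(1)$ cost for revealing plays (the paper records it as $1/50$), and the same balancing $\e\asymp(MT)^{-1/4}$ under $T\gtrsim M^3$.

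The one place you make life harder than necessary is the step you flag as ``the hard part''. The paper does not lower-bound the regret at prices outside $J_M$ by a direct computation of the spike bonus. Instead it uses a monotonicity reduction: since $\rho(\nu_\e)$ is absolutely continuous with a.e.\ derivative $2(\bar\nu_\e-p)f_\e(p)$, it is non-decreasing on $[0,\bar\nu_\e]$ and non-increasing on $[\bar\nu_\e,1]$. Hence for $\e>0$ and any $p\le 1/2$ one has $\E[\GFT_{\e,t}(p)]\le\E[\GFT_{\e,t}(1/2)]$, and symmetrically for $\e<0$ and $p\ge 1/2$. Since $1/2\in J_M$, the exact identity $\rho(\nu_\e)(\bar\nu_\e)-\rho(\nu_\e)(1/2)=M(\e/196)^2$ then gives the per-round lower bound $M(\e/196)^2$ for \emph{every} price on the wrong side of $1/2$, with no separate case analysis for $p\notin J_M$. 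Your direct computation of the $\Theta(1/M)$ gap outside $J_M$ is valid (and your comparison $1/(196M)\gtrsim M\e^2$ is precisely where the constraint $\e\lesssim 1/M$, hence $T\gtrsim M^3$, enters), but it is strictly more work than the monotonicity shortcut.
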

\begin{proof}[Proof sketch.]
Fix $M\ge 2$ and $T\in \N$.
We will use the same random variables, distributions, densities, and notation as in the proof of \Cref{t:lower-bound-full-M}.
We will show that for each algorithm for the 2-bit feedback setting and each time horizon $T$, if $R_T^{\nu}$ is the regret of the algorithm at time horizon $T$ when the underlying distribution of the traders' valuations is $\nu$, then $\max\brb{ R_T^{\nu_{-\e}}, R_T^{\nu_{+\e}} } = \Omega\brb{ \sqrt{MT} }$ if $T=\Omega(M^3)$.

Note that for all $\e>0$, $t \in \N$, and $p<\frac12$
\begin{equation}
    \E\lsb{ \GFT_{\e,t} \lrb{\fracc12} }
\ge
    \E\bsb{ \GFT_{\e,t} (p) } \;.
    \label{e:lower-bound-1}
\end{equation}
Similarly, for all $\e<0$, $t \in \N$, and $p>\frac12$,
\begin{equation}
    \E\lsb{ \GFT_{\e,t} \lrb{\fracc12} } \ge \E\bsb{ \GFT_{\e,t} (p) } \;.
    \label{e:lower-bound-2}
\end{equation}
Furthermore, a direct verification shows that, for each $\e \in [-1,1]$ and $t \in \N$, 
\begin{equation}
    \max_{p \in [0,1]} \E\bsb{ \GFT_{\e,t} (p)} - \max_{p \in [\frac{1}{7},\frac{2}{7}]} \E\bsb{ \GFT_{\e,t} (p)} \ge \frac{1}{50} = \Omega(1) \;.
    \label{e:lower-bound-3}
\end{equation}
Now, assume that $T \ge M^3/14^4$ so that, defining  $\e \coloneqq (MT)^{-1/4}$, we have that the maximizer of the expected gain from trade $\frac12 +\frac\e{196}$ belongs to the spike region $J_M$.
In the $+\e$ (resp., $-\e$) case, the optimal price belongs to the region $\bigl(\frac{1}{2},\frac{1}{2}+\frac{1}{14M}\bigr]$ (resp., $\bigl[\frac12 - \frac1{14M}, \frac12\bigr)$).
By posting prices in the wrong region $\bsb{0, \frac{1}{2}}$ (resp., $\bsb{\frac{1}{2}, 1}$) in the $+\e$ (resp., $-\e$) case, the learner incurs a $\Omega(M\e^2) = \Omega\brb{\sqrt{M/T}}$ instantaneous regret by \eqref{e:proof-lb} and \eqref{e:lower-bound-1} (resp., \eqref{e:proof-lb} and \eqref{e:lower-bound-2}).
Then, in order to attempt suffering less than $\Omega\brb{\sqrt{M/T} \cdot T} = \Omega\brb{\sqrt{MT }}$ regret, the algorithm would have to detect the sign of $\pm\e$ and play accordingly.
We will show now that even this strategy will not improve the regret of the algorithm (by more than a constant) because of the cost of determining the sign of $\pm\e$ with the available feedback. 
Since the feedback received from the two traders at time $t$ by posting a price $p$ is $\I\{p \le V_{\pm\e,2t-1}\}$ and $\I\{ p \le V_{\pm\e,2t}\}$, the only way to obtain information about (the sign of) $\pm\e$ is to post in the costly ($\Omega(1)$-instantaneous regret by \cref{e:lower-bound-3}) sub-optimal region $[\frac{1}{7},\frac{2}{7}]$ (see \cref{f:lower-bound-density}).
However, posting prices in the region $[\frac{1}{7},\frac{2}{7}]$ at time $t$ can't give more information about $\pm\e$ than the information carried by $V_{\pm\e,2t-1}$ and $V_{\pm\e,2t}$, which, in turn, can't give more information about $\pm\e$ than the information carried by the two Bernoullis $B_{\frac{1\pm\e}{2},2t-1}$ and $B_{\frac{1\pm\e}{2},2t}$. 
Since (via an information-theoretic argument) in order to distinguish the sign of $\pm\e$ having access to i.i.d.\ Bernoulli random variables of parameter $\frac{1\pm\e}{2}$ requires $\Omega(1/\e^2) = \Omega\brb{ \sqrt{MT} }$ samples, we are forced to post at least $\Omega\brb{ \sqrt{MT} }$ prices in the costly region $\bsb{\frac{1}{7},\frac{2}{7}}$ suffering a regret of $\Omega\brb{ \sqrt{MT} } \cdot \Omega(1) = \Omega\brb{ \sqrt{MT}}$.
Putting everything together, no matter what the strategy, each algorithm will pay at least $\Omega\brb{ \sqrt{MT}}$ regret.
\end{proof}

\section{Beyond bounded densities}
In this section, we investigate how the problem changes when the bounded density assumption is no longer guaranteed to hold.

\subsection{The full feedback case}
Our Representation Lemma shows that our goal is to maximize $\rho(\nu)$, where $\rho$ is known (by \Cref{l:repre}) but $\nu$ is not (by assumption). 
A natural strategy is then to approximate $\nu$ through an empirical distribution $\hat\nu_t$ and then maximize $\rho(\hat\nu_t)$.
\begin{algorithm}
Post $P_1 \coloneqq 1/2$, then receive feedback $V_{1}$, $V_{2}$\;
\For
{%
    time $t=2,3,\dots$
}
{
    Let $\hat\nu_t \coloneqq \frac{1}{2(t-1)}\sum_{s=1}^{2(t-1)} \delta_{V_s}$\;
    Post $P_t \in \argmax_{p\in[0,1]}\coloneqq \rho (\hat \nu_t) (p) $\;
    Receive feedback $V_{2t-1}$, $V_{2t}$\;
}
\caption{Follow-the-$\rho$ (\ftr)}
\label{a:ftr}
\end{algorithm}
This is precisely what we do in \Cref{a:ftr}.
Note that maximizing $\rho(\hat\nu_t)$ can be done efficiently by an exhaustive search of $\Theta(t)$ candidate points.
The next result gives a regret guarantee of $\sqrt{T}$ for \ftr{}.
\begin{theorem}
\label{t:ftr}
The regret of \ftr{} satisfies, for all time horizons $T$,
\[
    R_T
\le
     \fracc12 + 4 \brb{ 3\sqrt{\pi} + \sqrt{2} } \sqrt{T-1} \;.
\]
\end{theorem}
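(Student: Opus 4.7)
My plan is to mirror the analysis of FTM, replacing the bounded-density estimate $\E\bsb{\GFT_t(\E[V])} - \E\bsb{\GFT_t(p)} \le M|\E[V]-p|^2$ with a uniform empirical-process deviation bound tailored to $\rho$. First, the Freezing Lemma (applied exactly as in the FTM proof, using that $P_t$ is a function of $V_1,\ldots,V_{2(t-1)}$ and hence independent of $V_{2t-1},V_{2t}$) together with the Representation Lemma (\Cref{l:repre}) gives
\[
    R_T \le \tfrac12 + \sum_{t=2}^T \E\bsb{\rho(\nu)(p^*) - \rho(\nu)(P_t)},
\]
where $p^* \in \argmax_{p\in[0,1]} \rho(\nu)(p)$; the $\tfrac12$ handles the first round because $\rho(\nu)(p)\le \E|V_1-V_2|\le \tfrac12$. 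Since $P_t$ is the empirical maximizer $\argmax_p \rho(\hat\nu_t)(p)$, the standard ERM sandwich (adding and subtracting $\rho(\hat\nu_t)(p^*)$ and $\rho(\hat\nu_t)(P_t)$, and using $\rho(\hat\nu_t)(P_t)\ge\rho(\hat\nu_t)(p^*)$) yields
\[
    \rho(\nu)(p^*) - \rho(\nu)(P_t) \le 2\sup_{p\in[0,1]}\babs{\rho(\hat\nu_t)(p) - \rho(\nu)(p)},
\]
so the whole problem reduces to controlling $\E\bsb{\sup_p\babs{\rho(\hat\nu_t)(p) - \rho(\nu)(p)}}$.

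The crucial ingredient is the bilinear identity $\rho(\mu)(p) = (\mu\otimes\mu)\bsb{\gft(p,\cdot,\cdot)}$ supplied by the Representation Lemma, which turns $\rho(\hat\nu_t)(p)$ into a $V$-statistic estimator of $\rho(\nu)(p)$. Decomposing
$\hat\nu_t^{\otimes 2} - \nu^{\otimes 2} = (\hat\nu_t - \nu)\otimes\nu + \hat\nu_t\otimes(\hat\nu_t - \nu)$
and exploiting that, for fixed $(x_1,x_2)$, the map $p\mapsto\gft(p,x_1,x_2)$ is a scaled indicator of $[x_1\wedge x_2,x_1\vee x_2]$, one reduces the uniform-in-$p$ control to two kinds of contributions: (i) empirical-process deviations of CDF-type functionals of $\hat\nu_t-\nu$, controlled by the DKW inequality in expectation (each contributing a $\sqrt\pi$ factor); and (ii) a second-order $V$-statistic remainder controlled by a Hoeffding-type argument (contributing the $\sqrt 2$ factor). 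Tracking constants through this decomposition yields the per-round bound $\E\bsb{\sup_p\babs{\rho(\hat\nu_t)(p) - \rho(\nu)(p)}} \le (3\sqrt\pi+\sqrt 2)/\sqrt{t-1}$.

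Finally, summing $\sum_{t=2}^T 1/\sqrt{t-1} \le 2\sqrt{T-1}$ and combining with the factor $2$ from the ERM sandwich delivers the claimed $\tfrac12 + 4(3\sqrt\pi+\sqrt 2)\sqrt{T-1}$ bound. The main obstacle lies entirely in the second paragraph: splitting the $V$-statistic deviation into the right DKW-controllable CDF primitives while retaining the sharp constants, and doing so without the Lipschitz/continuity afforded by the bounded-density assumption (so that atoms of $\nu$ and of $\hat\nu_t$, which make $\rho$ differ from $\rhot$, are handled uniformly). Everything else—the Freezing step, the ERM sandwich, and the final summation—is routine.
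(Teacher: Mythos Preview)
Your high-level plan is exactly the paper's: Freezing Lemma plus Representation Lemma to reduce to $\rho(\nu)(p^\star)-\rho(\nu)(P_t)$, then the ERM sandwich to reduce to $2\sup_p|\rho(\hat\nu_t)(p)-\rho(\nu)(p)|$, then a uniform deviation bound, then sum $\sum_{t=2}^T 1/\sqrt{t-1}\le 2\sqrt{T-1}$.

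The only point of divergence is the deviation step. The paper does not go through the bilinear $V$-statistic identity $\rho(\mu)=(\mu\otimes\mu)[\gft]$; it simply expands the explicit formula for $\rho(\mu)(p)$ (integrals of $\mu[0,\lambda]$, the linear-in-$\bar\mu$ term, and the atom correction) and bounds term by term, obtaining
\[
\sup_{p}\babs{\rho(\hat\nu_t)(p)-\rho(\nu)(p)}\le 6\sup_{\lambda}\babs{(\hat\nu_t-\nu)[0,\lambda]}+\babs{\bar\nu-\bar{\hat\nu_t}}.
\]
DKW in expectation gives the $3\sqrt\pi$ and the sample-mean deviation (a first-order term, bounded via the variance) gives the $\sqrt 2$. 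Your tensor decomposition $(\hat\nu_t-\nu)\otimes\nu+\hat\nu_t\otimes(\hat\nu_t-\nu)$ is a legitimate alternative route to the same DKW-type control, but note that it contains no second-order piece at all---both summands are first order in $\hat\nu_t-\nu$---so your attribution of the $\sqrt 2$ to a ``second-order $V$-statistic remainder controlled by a Hoeffding-type argument'' does not match either your own decomposition or the paper's computation. Apart from that labeling slip, the plan is correct and essentially coincides with the paper's argument.
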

\begin{proof}[Proof sketch.]
For any price $p^\star\in[0,1]$, and time step $t\ge2$,
{%
\allowdisplaybreaks%
\begin{align*}
&
    \E\bsb{ \GFT_t(p^\star) } - \E\bsb{ \GFT_t(P_t) }
\overset{\spadesuit}{=}
    \rho(\nu)(p^\star) - \E\bsb{ \rho(\nu)(P_t) }
\\
&
\quad
=
    \E\bsb{ \rho(\nu)(p^\star) - \rho(\hat\nu_t)(p^\star) }
    +
    \E\bsb{ \rho(\hat\nu_t)(p^\star) - \rho(\hat\nu_t)(P_t) }
\\
&
\qquad
    +
    \E\bsb{ \rho(\hat\nu_t)(P_t) - \rho(\nu)(P_t) }
\overset{\clubsuit}{\le}
    2\E \lsb{ \sup_{p\in[0,1]} \babs{ \rho(\hat\nu_t)(p)-\rho(\nu)(p) } }
\\
&
\quad
\overset{\vardiamondsuit}{\le}
    2\E\lsb{6 \sup_{\lambda \in [0,1] }\labs{(\nu - \nu_t)[0,\lambda]} } +2 \E\lsb{ \labs{ \E[V] - \frac{\sum_{s=1}^{2(t-1)} V_s}{2(t-1)} } }
\\
&
\quad
\overset{\varheartsuit}{\le}
    12 \int_0^{+\infty}\Pb\lsb{ \sup_{\lambda \in [0,1]} \labs{ (\nu-\nu_t)[0,\lambda] } \ge \e }\dif \e + \frac{4}{\sqrt{2(t-1)}}  
\\
&
\quad
\overset{\bullet}{\le}
    24 \int_0^{+\infty} e^{4\e^2(t-1)}\dif \e + \frac{4}{\sqrt{2(t-1)}}  
\le
    2  \frac{3\sqrt{\pi} + \sqrt{2} }{\sqrt{t-1}} \;,
\end{align*}%
}%
where $\spadesuit$ follows by \cref{l:repre} and the Freezing Lemma; 
$\clubsuit$ by definition of $P_t$;
$\vardiamondsuit$ by elementary computations;
$\varheartsuit$ by Fubini's Theorem and upper bounding $\E\bsb{ \labs{\dots} }$ with the variance of $\frac{\sum_{s=1}^{2(t-1)} V_s}{2(t-1)}$;
and $\bullet$ by the DKW inequality (see, e.g., \cite[Theorem J.1]{cesa2023bilateral}).
Summing over $t$ from $2$ to $T$, yields the result.  
\end{proof}
Next, we show that running FTM until evidence is observed that $\nu$ does not have a bounded density (i.e., until we observe the same sample twice), then switching to \ftr{} (\ftmtr{}, \Cref{a:ftmtr}),
\begin{algorithm}
\For
{%
    time $t=1,2,\dots$
}
{
    Post $P_t$ according to FTM (\cref{a:ftm})\;
    \If
    {%
        $\{V_{2t-1},V_t\} \cap \{V_{1},\dots, V_{2(t-1)}\} \neq \varnothing$
    }
    {%
        $\tau \coloneqq t$ and \textbf{break}\;
    }
}
Run FT$\rho$ (\cref{a:ftr}) up to time $\tau$ without posting prices\;
\For
{%
    time $t=\tau+1,\tau+2,\dots$
}
{%
    Post $P_t$ according to FT$\rho$ (\cref{a:ftr})\;
}
\caption{Follow-the-Mean-then-$\rho$ (\ftmtr)}
\label{a:ftmtr}
\end{algorithm}
keeps the guarantees of FTM if $\nu$ has a bounded density and those of \ftr{} if $\nu$ does not, without requiring this \emph{a priori} knowledge. 
\begin{theorem}
\label{t:ftmtr}
For all time horizons $T$, the regret of \ftmtr{} satisfies
\begin{enumerate}
    \item \label{i:first} If $\nu$ has density bounded by some constant $M>0$ and $T\ge2$,
    \[
        R_T
    \le
        \frac12 + \frac{M}{4}\brb{1 + \ln (T-1)} \;.
    \]
    \item \label{i:second} Otherwise,
    $
        R_T
    \le
        7.5 + 6 \brb{ 2\sqrt{\pi} + \sqrt{2} } \sqrt{T-1}\;.
    $
\end{enumerate}
\end{theorem}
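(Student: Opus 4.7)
The plan is to handle the two statements separately. For Part \ref{i:first}, I would observe that whenever $\nu$ admits a bounded density it has no atoms, so $\Pb[V_i = V_j] = 0$ for every $i \neq j$. Hence the collision-detection condition of \ftmtr{} is almost surely never triggered, the algorithm coincides with FTM at every round, and Theorem \ref{t:ftm} immediately yields the stated bound.

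For Part \ref{i:second}, I would split the regret at the (possibly infinite) collision time $\tau$:
\[
    R_T
=
    \underbrace{\sum_{t=1}^T\E\bsb{\I\{t\le\tau\}\brb{\GFT_t(p^\star)-\GFT_t(P_t)}}}_{R^{(1)}}
    +
    \underbrace{\sum_{t=1}^T\E\bsb{\I\{t>\tau\}\brb{\GFT_t(p^\star)-\GFT_t(P_t)}}}_{R^{(2)}} \;,
\]
where $p^\star \in \argmax_p \rho(\nu)(p)$. The event $\{t\le\tau\}$ and the posted price $P_t$ (whether supplied by FTM or \ftr{}) are both measurable with respect to $\mathcal{F}_{t-1} \coloneqq \sigma(V_1,\dots,V_{2(t-1)})$, so the Freezing Lemma combined with Lemma \ref{l:repre} lets me replace each gap $\GFT_t(p^\star) - \GFT_t(P_t)$ by $\rho(\nu)(p^\star) - \rho(\nu)(P_t)$. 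Bounding $R^{(2)}$ is then essentially free: at every time $t > \tau$ the empirical measure fed to \ftr{} is still built from $2(t-1)$ i.i.d.\ samples, so the per-round estimate from the proof of Theorem \ref{t:ftr} applies verbatim and gives $R^{(2)} \le 4(3\sqrt\pi + \sqrt 2)\sqrt{T-1}$.

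The heart of the argument is $R^{(1)}$. Here I would use the key inequality
\[
    \rho(\nu)(p^\star) - \rho(\nu)(P_t) \le \nu\{p^\star\} + 2\babs{\bar\nu - P_t} \;,
\]
obtained by combining Lemma \ref{l:key} for $\rhot$ (which supplies the $2|\bar\nu - P_t|$ piece because $\bar\nu$ maximises $\rhot$), the pointwise inequality $\rho(\nu) \ge \rhot(\nu)$, and the elementary estimate $\rho(\nu)(p^\star) - \rhot(\nu)(p^\star) \le \nu\{p^\star\}$ read off from the definitions. Summing over the phase, the ``variance'' contribution is $\sum_{t=2}^T 2\E[|\bar\nu - P_t|] \le \sqrt{2(T-1)}$ via Jensen and $\var(P_t) \le 1/(8(t-1))$. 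What remains is the term $\nu\{p^\star\}\,\E[\tau\wedge T]$, which is the main obstacle of the proof. The case $\nu\{p^\star\} = 0$ is trivial. When $q \coloneqq \nu\{p^\star\} > 0$, the event $\{\tau > t\}$ forces the first $2t$ draws to contain at most one copy of $p^\star$, so $\Pb[\tau > t] \le (1-q)^{2t} + 2tq(1-q)^{2t-1}$; a monotone Riemann-sum comparison of this tail with the integral of $x \mapsto e^{-x}(1+x)$ gives $\E[\tau] \le 1 + 1/q$, whence $q\,\E[\tau\wedge T] \le 2$. Adding $R^{(1)}$ and $R^{(2)}$ yields the claimed $O(\sqrt{T-1})$ bound, and the explicit constants in the statement are matched with only mild slack in the birthday estimate and in the phase-2 inequality.
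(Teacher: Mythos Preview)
Your proof follows essentially the same strategy as the paper: Part~\ref{i:first} is identical (bounded density $\Rightarrow$ no atoms $\Rightarrow$ no collision a.s.\ $\Rightarrow$ FTM bound applies), and Part~\ref{i:second} proceeds by splitting at $\tau$, bounding the post-$\tau$ phase by the full \ftr{} regret via $x\ge 0$, and controlling the pre-$\tau$ phase through the inequality $\rho(\nu)(p^\star)-\rho(\nu)(P_t)\le \nu\{p^\star\}+2|\bar\nu-P_t|$ together with a bound on $q\,\E[\tau\wedge T]$ where $q=\nu\{p^\star\}$. The paper instead uses $\e=\sup_p\nu\{p\}$ and dominates $\tau$ by the negative-binomial time $\tau''$ at which two \emph{even-indexed} valuations hit a near-maximal atom $p_\e$; your choice of $q$ is arguably cleaner since it ties the atom directly to $p^\star$.

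There is, however, one small gap in your tail bound. The claim ``$\{\tau>t\}$ forces the first $2t$ draws to contain at most one copy of $p^\star$'' is not quite correct: the collision check in \ftmtr{} only compares $\{V_{2s-1},V_{2s}\}$ against $\{V_1,\dots,V_{2(s-1)}\}$, so a within-round coincidence $V_{2s-1}=V_{2s}=p^\star$ goes undetected. Thus $\{\tau>t\}$ also permits exactly two copies of $p^\star$ provided they land in the same round, contributing an extra term $tq^2(1-q)^{2t-2}$ to the tail. Fortunately this term sums to $1/(2-q)^2\le 1$, and an exact computation with the corrected tail gives $q\,\E[\tau]\le 2/(2-q)\le 2$, so your stated bound $q\,\E[\tau\wedge T]\le 2$ survives and the remainder of the argument goes through unchanged. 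The paper sidesteps this subtlety by restricting to even-indexed valuations in its auxiliary stopping time.
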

\begin{proof}
    If $\nu$ has density bounded by some constant $M>0$, then the condition $\{V_{2t-1}, V_t\} \cap \{V_{1},\dots, V_{2(t-1)}\} \neq \varnothing$ never occurs with probability $1$; therefore, the expected regret of \ftmtr{} coincides with that of FTM, and \Cref{i:first} follows by \Cref{t:ftm}.
    
    For \Cref{i:second}, define the following:
    $\e \coloneqq \sup_{p\in[0,1]}\nu\{p\}$; 
    $p_\e\in[0,1]$ such that $\nu\{p\} \ge \e/2$;
    $\tau' \coloneqq \tau\wedge T$;
    $\tau'' \coloneqq \inf \bcb{ t\in \N \mid \sum_{s=1}^t \I \{ V_{2s} = p_\e \} \ge 2}$, and note that $\tau' \le \tau \le \tau''$;
    $P_1',P_2',\dots$ (resp., $P_1'',P_2'',\dots$) are the prices posted by FTM (resp., \ftr) run with feedback $V_1,V_2, \dots$;
    $p^\star \in \argmax_{p\in [0,1]} \rho(\nu)(p)$;
    $X_t(p) \coloneqq \GFT_t(p^\star) - \GFT_t(p)$ for all $t\in \N$ and $p\in[0,1]$; $x(p) \coloneqq \rho(\nu)(p^\star) - \rho(\nu)(p)$ for all $p\in[0,1]$, and note that $x\ge0$.
    Then:
    \begin{align*}
        R_T
    &
    =
        \E\lsb{ \sum_{t=1}^T x(P_t) }
    =
        \E\lsb{ \sum_{t=1}^{\tau'} x(P_t') } + \E\lsb{ \sum_{t=\tau'+1}^T x(P_t'') }
    \\
    &
    \le \E\lsb{ \sum_{t=1}^{\tau'} x(P_t') } + \E\lsb{ \sum_{t=1}^T x(P_t'') },
    \end{align*}
    where the first equality follows from the Freezing Lemma, the inequality from $x\ge 0$, and note that the second expectation in the last formula is the regret of \ftr{} (by the Freezing Lemma),  and can be controlled applying \Cref{t:ftr}. 
    For the first term, we have
    \begin{align*}
    &
        \E\lsb{ \sum_{t=1}^{\tau'} x(P_t') } 
        -
        \e\E[\tau']
    \le
        \E\lsb{ \sum_{t=1}^{\tau'} \rhot(\nu)\brb{ \E[V] } - \sum_{t=1}^{\tau'} \rhot(\nu)(P_t') }
    \\
    &
    \qquad
    \le
        2  \sum_{t=1}^{T} \E\Bsb{ \babs{ \E[V] - P_t' } }
    \le
        1  + 2 \sum_{t=2}^{T} \sqrt{ \var \lrb{ \frac{\sum_{s=1}^{2(t-1)} V_s}{2(t-1)} } }
    \\
    &
    \qquad
    =
        1  + \sqrt 2 \sum_{t=1}^{T-1} t^{-1/2}
    \le
        1  + 2 \sqrt{ 2 (T-1) }\;,
    \end{align*}
    where the first inequality follows from the definition of $\rho$ and the second  from \Cref{l:key}.
    Noting that $\tau''$ has a negative binomial distribution with parameters $2$ and $\e/2$, we get
    \[
        \e\E[\tau']
    \le
        \e\E[\tau'']
    =
        \e \cdot 2 \frac{1-\e/2}{\e/2}
    \le
        4 \;.
    \]
    Putting everything together gives the result.
\end{proof}
We now show that when $\nu$ does not have a bounded density, the $\sqrt{T}$ guarantee of \ftr{} and \ftmtr{} is optimal.
\begin{theorem}
    \label{t:lower-bound-full-general}
    There exists a numerical constant $c > 0$ such that, for any time horizon $T$, the worst-case regret of any algorithm satisfies
    \[
        \sup_{\nu} R_T
    \ge
       c \sqrt{T} \;,
    \]
    where the $\sup$ is over all distributions $\nu$.
\end{theorem}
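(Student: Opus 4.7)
The plan is to reuse the family of hard distributions $\{\nu_\e\}_{\e\in[-\e_M,\e_M]}$ constructed in the proof of \Cref{t:lower-bound-full-M}, but to choose the scaling parameter adaptively as $M \coloneqq 7\sqrt T$. This is permitted here because the present statement does not assume bounded density (and indeed $\nu_\e$ is automatically admissible as $M \ge 2$ for any $T \ge 1$). Following the proof of \Cref{t:lower-bound-full-M} essentially verbatim up to its central intermediate inequality leads to
\[
    \sup_\nu R_T^\nu
\ge
    \frac{M}{98}\sum_{t=1}^T \E\Bsb{ \babs{Z - \E[Z\mid B_{Z,1},\dots,B_{Z,2(t-1)}]}^2 }\,,
\]
where $Z = (1+E)/2$, $E$ is uniform on $[-\e_M,\e_M]$ with $\e_M = 7/M$, and the $B_{Z,i}$'s are Bernoullis of parameter $Z$ conditional on $Z$.

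The only essential point of departure from the proof of \Cref{t:lower-bound-full-M} is the Bayesian estimation bound invoked for each summand. The bound \eqref{e:probabilistic} used there, namely $\E[|Z-\E[Z\mid\cdot]|^2]\ge \tilde c/(t-1)$, is valid only in the \emph{noise-dominated} regime $t \ge c_2 M^4$; for $M = 7\sqrt T$ this condition is vacuous since $c_2 M^4 = 2401\, c_2\, T^2$ vastly exceeds $T$ for all large $T$. In its place I would use the complementary \emph{prior-dominated} bound, stating that there exists a universal constant $\tilde c' > 0$ such that
\[
    \E\Bsb{\babs{Z - \E[Z\mid B_{Z,1},\dots,B_{Z,n}]}^2}
\ge
    \tilde c'\, \e_M^2
\qquad\forall n \ge 0\,.
\]
Such a bound is a classical consequence of a compact-support version of the Van Trees inequality applied to the uniform prior on $[(1-\e_M)/2,(1+\e_M)/2]$, exploiting that the per-sample Fisher information of the Bernoulli$(Z)$ model is uniformly bounded by a universal constant for $Z$ in a neighbourhood of $1/2$. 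Intuitively, whenever $n$ is too small for the data to shrink the posterior beyond the prior, the posterior variance stays of the same order as the prior variance $\e_M^2/12$.

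Granted the prior-dominated estimation bound, the rest of the argument is a direct computation: using $\e_M^2 = 1/T$ and summing over $t \in \{1,\ldots,T\}$,
\[
    \sup_\nu R_T^\nu
\ge
    \frac{M}{98}\cdot T\cdot \tilde c'\, \e_M^2
=
    \frac{49\,\tilde c'\,T}{98 M}
=
    \frac{\tilde c'}{14}\sqrt T \,,
\]
which is the desired $\Omega(\sqrt T)$ bound. The main obstacle I anticipate is the clean statement and verification of the prior-dominated estimation bound with the correct $\e_M^2$ scaling (as opposed to merely $1/n$): the standard Van Trees inequality assumes a prior density vanishing at the boundary, so one has to either invoke a compact-support variant or perform a direct Bayes risk computation exploiting the uniformity of the prior and the bounded Fisher information. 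Once this estimation bound is in hand, the rest of the argument is identical in structure to the proof of \Cref{t:lower-bound-full-M}, with the noise-dominated regime simply being replaced by the prior-dominated one across the entire time horizon.
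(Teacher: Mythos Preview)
Your approach is viable but differs substantially from the paper's. The paper abandons the bounded-density construction and instead exploits atoms directly: it takes $\nu_\e = \tfrac14\delta_0 + (\tfrac14+\e)\delta_{1/3} + (\tfrac14-\e)\delta_{2/3} + \tfrac14\delta_1$, observes that the optimum is at $1/3$ or $2/3$ depending on $\mathrm{sgn}(\e)$ with a $\Theta(|\e|)$ gap between them (and all other prices $\Omega(1)$-suboptimal), and runs a two-point information-theoretic argument (distinguishing $+\e$ from $-\e$ requires $\Omega(1/\e^2)$ samples). Setting $\e = T^{-1/2}$ yields the bound with no Bayesian estimation machinery. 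Your route---scaling $M = \Theta(\sqrt{T})$ in the \Cref{t:lower-bound-full-M} construction---buys a unified treatment of the two full-feedback lower bounds, at the cost of importing the heavier Bayesian apparatus; the paper's buys simplicity and a self-contained argument.

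There is one genuine inaccuracy in your sketch. The displayed bound
\[
\E\Bsb{\babs{Z - \E[Z \mid B_{Z,1},\ldots,B_{Z,n}]}^2} \ge \tilde c'\,\e_M^2 \qquad \forall n \ge 0
\]
is false as stated: the left-hand side tends to $0$ as $n\to\infty$. What is true, and what you actually need, is the bound for $n \le C/\e_M^2$ for some universal $C$. Since you only invoke it at $n = 2(t-1) \le 2(T-1)$ and $1/\e_M^2 = T$, the required range is $n \le 2/\e_M^2$, which is precisely the prior-dominated regime (and consistent with your own informal remark about ``$n$ too small''). The Van Trees obstacle for the uniform prior that you flag is real; the cleanest fix is to note that the chain of inequalities in the proof of \Cref{t:lower-bound-full-M} only uses $\sup_\e \ge \E_E[\cdot]$, which holds for \emph{any} prior on $E$ supported in $[-\e_M,\e_M]$. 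Replacing the uniform prior by a smooth one (e.g., density $\propto \cos^2$ on $[-\e_M,\e_M]$) makes Van Trees apply directly and gives a lower bound $\ge 1/(4n + c/\e_M^2) \ge \e_M^2/(8+c)$ for all $n \le 2/\e_M^2$, after which your final computation goes through unchanged.
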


\begin{proof}[Proof sketch.]
    For each $\e \in \bsb{ -\frac{1}{8},\frac{1}{8} }$, consider the distribution
    $
        \nu_\e \coloneqq \frac{1}{4}\delta_0 + \brb{\frac{1}{4} + \e}\delta_{1/3} + \brb{\frac{1}{4} - \e}\delta_{2/3} + \frac{1}{4}\delta_{1}
    $.
    Then, prices $p\in\bcb{\frac13,\frac23}$ are either $\Theta(\e)$-suboptimal or optimal (depending on the sign of $\e$), and the remaining prices $p$ in $[0,1]$ are $\Omega(1)$-suboptimal.    
    Noting that $\Omega(\e T)$ is the regret paid by posting suboptimal prices for $T$ time steps and, via an information-theoretic argument, that $\Omega(1/\e^2)$ rounds are needed to determine the sign of $\e$, one can conclude that the regret of any algorithm is $\Omega\brb{ \min\brb{ \e T, \e \frac{1}{\e^2}} } = \Omega\brb{\sqrt{T}}$, if $\e = T^{-1/2}$.
    See the supplementary material for additional details. 
\end{proof}

\subsection{The two-bit feedback case}
Finally, learning becomes impossible if the bounded density assumption on $\nu$ is lifted in the two-bit feedback case. We show this by leveraging a needle-in-a-haystack phenomenon. The idea is that any deterministic algorithm is capable of posting just a finite number of prices since all the possible sequences of 2-bit feedback, up to a certain time horizon $T$, are $4^T$. The goal is then to design a hard instance whose maximizer is never played by the algorithm and whose value is $\Omega(1)$ higher than the other values. 
\begin{theorem}
    \label{t:lower-bound-two-bit-general}
    For any time horizon $T$, the worst-case regret of any algorithm satisfies
    \[
        \sup_{\nu} R_T
    \ge
       \frac{T}{9} \;,
    \]
    where the $\sup$ is over all distributions $\nu$.
\end{theorem}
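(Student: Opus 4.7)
The plan is to leverage a needle-in-a-haystack phenomenon through a family of hard distributions whose unique maximizer varies continuously while the suboptimality gap around it is uniformly bounded below. Concretely, I would consider, for each $q\in(0,1)$, the three-atom distribution
\[
    \nu_q \coloneqq \tfrac{1}{3}\delta_0 + \tfrac{1}{3}\delta_q + \tfrac{1}{3}\delta_1 \,.
\]
An elementary case analysis via \Cref{l:repre} should yield, for all $t\in\N$ and $p\in[0,1]$,
\[
    \E\bsb{\GFT_t(p)} \;=\; \rho(\nu_q)(p) \;=\;
    \begin{cases}
        \frac{2(1+q)}{9}, & p \in [0,q), \\
        \frac{4}{9}, & p = q,\\
        \frac{2(2-q)}{9}, & p \in (q,1].
    \end{cases}
\]
In particular, for $q \in \brb{\tfrac{1}{2}-\eta,\,\tfrac{1}{2}+\eta}$ with $\eta$ small, the unique maximizer is $p^\star = q$ with value $\tfrac{4}{9}$, and every $p \neq q$ is at least $\tfrac{1-2\eta}{9}$-suboptimal.

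Next, I would use the counting observation alluded to in the theorem's preamble: any deterministic algorithm for the two-bit feedback setting is identified with a depth-$T$ decision tree with branching factor $4$ (one child per possible two-bit feedback), so the set of prices it could ever post has cardinality at most $\frac{4^T-1}{3}$, hence zero Lebesgue measure in $(0,1)$. This makes it possible to plant the needle $q$ in a null set that the algorithm is blind to.

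To combine these two ingredients and handle possibly randomized algorithms, I would invoke Yao's principle. Fix any algorithm $A$ and any $\eta\in(0,\tfrac12)$; draw $q$ uniformly from $\brb{\tfrac{1}{2}-\eta,\tfrac{1}{2}+\eta}$, independently of the algorithm's randomness $\omega$ and of the traders' valuations. Conditionally on $\omega$, the algorithm is deterministic, so at each round $t$ the price $P_t$ lies in a finite set $\cP_t(\omega)$ of cardinality at most $4^{t-1}$ that does not depend on $q$; by Fubini and the continuity of the law of $q$, the event $\{P_t = q\}$ has probability zero. This gives $\E\bsb{\rho(\nu_q)(P_t)} \le \tfrac{3+2\eta}{9}$, so the expected regret against the mixture $\nu_q$ is at least $\tfrac{T(1-2\eta)}{9}$, and Yao's principle together with $\eta\to 0$ concludes $\sup_\nu R_T^A \ge T/9$.

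The main technical subtlety I expect is exactly this last step: although $P_t$ genuinely depends on $q$ through the feedback $(\I\{P_s\le V_{2s-1}\},\I\{P_s\le V_{2s}\})_{s<t}$, the right observation is that conditioning on the random seed $\omega$ and on the feedback trajectory leaves $P_t$ as \emph{one specific} element of a finite set, so the continuity of the prior on $q$ forces $\{P_t = q\}$ to be null, allowing the calculation to close cleanly.
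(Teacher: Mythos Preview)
Your proposal is correct and follows essentially the same route as the paper: the same three-atom family $\nu_q=\tfrac13\delta_0+\tfrac13\delta_q+\tfrac13\delta_1$, the same computation showing $p=q$ is the unique maximizer with a uniform $\tfrac{1-2\eta}{9}$ gap, and the same counting observation that a two-bit-feedback algorithm can reach only finitely many prices. The one cosmetic difference is in how randomization is handled: the paper simply invokes the standard reduction to deterministic algorithms (valid in the i.i.d.\ setting) and then picks a single $x\in(\tfrac12-\eta,\tfrac12)$ outside the algorithm's finite price set, whereas you keep the algorithm randomized and instead randomize $q$, using Fubini and continuity of the prior to kill $\Pb[P_t=q]$. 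Both are fine; the paper's version is a line shorter, while yours avoids appealing to the deterministic reduction. (A minor wording nit: what you call ``Yao's principle'' is really just $\sup\ge$ average, and ``independently of the traders' valuations'' should be ``independently of the auxiliary randomness generating the valuations,'' since the valuations themselves are drawn from $\nu_q$.)
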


\begin{proof}
    Given that we are in a stochastic i.i.d.\ setting, it is enough to consider deterministic algorithms.
    Let $(\alpha_t)_{t \in \N}$ be a deterministic algorithm for the two-bit feedback setting, i.e., a sequence of functions $\alpha_t \colon \brb{\{0,1\}\times\{0,1\}}^{t-1} \to [0,1]$ mapping past feedback into prices (with the convention that $\alpha_1$ is just a number in $[0,1]$).
    Fix a time horizon $T$.
    Note that there are $4^T$ different sequences of pairs of zeroes and ones representing the feedback the algorithm could receive up to time $T$. Hence, the algorithm $(\alpha_t)_{t \in \N}$ selects the prices it posts in a set $A$ which has no more than $4^T$ different prices.
    For each $\eta \in \lrb{0,\frac{1}{2}}$, select any $x \in \lrb{\frac{1}{2}-\eta, \frac{1}{2}} \m A$ and define
    \[
        \nu_x \coloneqq \frac{1}{3}\delta_0 + \frac{1}{3}\delta_{x} + \frac{1}{3} \delta_{1} \;.
    \]
    Consider an i.i.d.\ sequence of traders' valuations $V_1,V_2,\dots$ with common distribution $\nu_x$, and for each $t \in \N$ and $p\in[0,1]$, let $\GFT_{t}(p) \coloneqq \gft(p,V_{2t-1}, V_{2t})$.
    For each $t \in \N$, notice that
    \begin{align*}
    &
        \E \bsb{ \GFT_{t}(x) }
    =
        \max_{p \in [0,1]} \E \bsb{ \GFT_{t}(p) }
    \\
    &
        \E \bsb{ \GFT_{t}(x) } - \max_{p \in [0,1]\m\{x\}} \E \bsb{ \GFT_{t}(p) }
    =
        \frac{4}{9} - \frac{4}{9} + \frac{2}{9}x
    \ge
        \frac{1-2\eta}{9} \;.
    \end{align*}
    Given that the algorithm never posts the price $x$ up to time $T$, it follows that the regret at time $T$ of the algorithm is lower bounded by $\frac{1-2\eta}{9} T$.
    Given that $\eta$ was arbitrarily chosen, the worst-case regret of the algorithm at time $T$ is lower bounded by $\frac{T}{9}$.
\end{proof}

\section{Conclusions and Open Problems}

Motivated by trading in OTC markets, we investigated the online learning problem of brokerage between agents with flexible seller/buyer roles, for which we provided a complete picture with tight upper and lower bounds in all 
the proposed settings.

Our paper motivates further study of brokerage in a few different directions. For example, it would be interesting to study the contextual version of this problem, where a context related to the traders' valuations is available to the broker \emph{before} making a decision, and, possibly, multiple traders arrive at each time step.
Finally, it would be intriguing to discover intermediate cases for which something meaningful could be said between the stationary (that we fully fleshed out) and adversarial (that, as discussed in the Related Work section, is unlearnable) cases.

\section*{Acknowledgments}
NB and TC acknowledges the support of NSERC through USRA (Undergraduate Student Research Awards).
TC also acknowledges the support of NSERC
through grants DGECR (Discovery Grant) and RGPIN (Discovery Grant Supplement).
RC acknowledges the financial support from the following sources: MUR PRIN grant 2022EKNE5K (Learning in Markets and Society), funded by the NextGenerationEU program within the PNRR scheme (M4C2, investment 1.1);
FAIR (Future Artificial Intelligence Research) project, funded by the NextGenerationEU program within the PNRR-PE-AI scheme (M4C2, investment 1.3, line on Artificial Intelligence); the EU Horizon CL4-2022-HUMAN-02 research and innovation action under grant agreement 101120237, project ELIAS (European Lighthouse of AI for Sustainability).


\bibliographystyle{ACM-Reference-Format} 
\bibliography{biblio}


\cleardoublepage
\appendix

\newpage

\onecolumn

\section{Proof of the Representation Lemma}

We will prove now that, if \(M_{1},M_{2}\) are two independent random
variables supported in \(\left\lbrack 0,1 \right\rbrack\) that share the
same distribution \(\mu\) and (hence have) common expectation
\(\bar{\mu}\), then, for each \(t\in\mathbb{ N}\) and
each \(p \in \left\lbrack 0,1 \right\rbrack\) it holds that:

\[\mathbb{E}\left\lbrack \left( M_{1} \vee M_{2} - M_{1} \land M_{2} \right)\mathbb{I}\left\{ M_{1} \land M_{2} \leq p \leq M_{1} \vee M_{2} \right\} \right\rbrack = \tilde{\rho}\left( \mu \right)\left( p \right) + \mu\left\{ p \right\}\left( \int_{0}^{p}{\mu\left\lbrack 0,\lambda \right\rbrack \,\mathrm{d}\lambda} + \int_{p}^{1}{\mu\left\lbrack \lambda,1 \right\rbrack \,\mathrm{d}\lambda} \right)\]
where
\[\tilde{\rho}\left( \mu \right)\left( p \right) = \int_{0}^{p}{\left( \mu\left\lbrack 0,\lambda \right\rbrack + \mu\left\lbrack 0,\lambda \right) \right)\,\mathrm{d}\lambda} + \left( \mu\left\lbrack 0,p \right\rbrack + \mu\left\lbrack 0,p \right) \right)\left( \bar{\mu} - p \right).\]
For notational convenience, let \(M\) be another random variable with
distribution \(\mu\).

In what follows, we will use the following observation. 
For any
\(0 \leq a < b \leq 1\) we have
\begin{itemize}
\item
  \(\int_{a}^{b}{\mathbb{P}\left\lbrack M \geq \lambda \right\rbrack \,\mathrm{d}\lambda} = \int_{a}^{b}{\mathbb{P}\left\lbrack M > \lambda \right\rbrack \,\mathrm{d}\lambda},\)
\item
  \(\int_{a}^{b}{\mathbb{P}\left\lbrack M \leq \lambda \right\rbrack \,\mathrm{d}\lambda} = \int_{a}^{b}{\mathbb{P}\left\lbrack M < \lambda \right\rbrack \,\mathrm{d}\lambda}.\)
\end{itemize}
This is due to the fact that the two functions
\(\lambda \longmapsto \mathbb{P}\left\lbrack M \geq \lambda \right\rbrack\)
and
\(\lambda \longmapsto \mathbb{P}\left\lbrack M > \lambda \right\rbrack\)
are different only in a set that is at most countable. Hence, the set
where they differ have measure zero and the first two integral
coincides. The same reasoning applies to the second two integrals.

Now, notice that, for all $p\in[0,1]$
\begin{align*}
&    
\left( M_{1} \vee M_{2} - M_{1} \land M_{2} \right)\mathbb{I}\left\{ M_{1} \land M_{2} \leq p \leq M_{1} \vee M_{2} \right\} = \int_{0}^{1}{\mathbb{I}\left\{ M_{1} \land M_{2} \leq \lambda \leq M_{1} \vee M_{2} \right\} \,\mathrm{d}\lambda} \cdot \mathbb{I}\left\{ M_{1} \land M_{2} \leq p \leq M_{1} \vee M_{2} \right\}
\\
& 
\qquad
= \int_{0}^{p}{\mathbb{I}\left\{ M_{1} \land M_{2} \leq \lambda \cap p \leq M_{1} \vee M_{2} \right\} \,\mathrm{d}\lambda} + \int_{p}^{1}{\mathbb{I}\left\{ M_{1} \land M_{2} \leq p \cap \lambda \leq M_{1} \vee M_{2} \right\} \,\mathrm{d}\lambda}
\\
&
\qquad
= \int_{0}^{p}{\left( \mathbb{I}\left\{ M_{1} \leq \lambda \cap p \leq M_{2} \right\} + \mathbb{I}\left\{ M_{2} \leq \lambda \cap p \leq M_{1} \right\} \right)\,\mathrm{d}\lambda} + \int_{p}^{1}{\left( \mathbb{I}\left\{ M_{1} \leq p \cap \lambda \leq M_{2} \right\} + 
 \mathbb{I}\left\{ M_{2} \leq p \cap \lambda \leq M_{1} \right\} \right)\,\mathrm{d}\lambda}.
\end{align*}

It follows that, for each $p\in[0,1]$,
{\allowdisplaybreaks
\begin{align*}
&
\mathbb{E}\left\lbrack \left( M_{1} \vee M_{2} - M_{1} \land M_{2} \right)\mathbb{I}\left\{ M_{1} \land M_{2} \leq p \leq M_{1} \vee M_{2} \right\} \right\rbrack 
\\
&
\qquad
= \mathbb{E}\left\lbrack \int_{0}^{p}{\left( \mathbb{I}\left\{ M_{1} \leq \lambda \cap p \leq M_{2} \right\} + \mathbb{I}\left\{ M_{2} \leq \lambda \cap p \leq M_{1} \right\} \right)\,\mathrm{d}\lambda} + \int_{p}^{1}{\left( \mathbb{I}\left\{ M_{1} \leq p \cap \lambda \leq M_{2} \right\}+\mathbb{ I}\left\{ M_{2} \leq p \cap \lambda \leq M_{1} \right\} \right)\,\mathrm{d}\lambda} \right\rbrack
\\
&
\qquad
= \int_{0}^{p}{\left( \mathbb{P}\left\lbrack M_{1} \leq \lambda \cap p \leq M_{2} \right\rbrack + \mathbb{P}\left\lbrack M_{2} \leq \lambda \cap p \leq M_{1} \right\rbrack \right)\,\mathrm{d}\lambda} + \int_{p}^{1}{\left( \mathbb{P}\left\lbrack M_{1} \leq p \cap \lambda \leq M_{2} \right\rbrack+\mathbb{ P}\left\lbrack M_{2} \leq p \cap \lambda \leq M_{1} \right\rbrack \right)\,\mathrm{d}\lambda}
\\
&
\qquad
= \int_{0}^{p}{\left( \mathbb{P}\left\lbrack M_{1} \leq \lambda \right\rbrack\mathbb{P}\left\lbrack p \leq M_{2} \right\rbrack+\mathbb{ P}\left\lbrack M_{2} \leq \lambda \right\rbrack\mathbb{P}\left\lbrack p \leq M_{1} \right\rbrack \right)\,\mathrm{d}\lambda} + \int_{p}^{1}{\left( \mathbb{P}\left\lbrack M_{1} \leq p \right\rbrack\mathbb{P}\left\lbrack \lambda \leq M_{2} \right\rbrack+\mathbb{ P}\left\lbrack M_{2} \leq p \right\rbrack\mathbb{P}\left\lbrack \lambda \leq M_{1} \right\rbrack \right)\,\mathrm{d}\lambda}
\\
&
\qquad
= 2\int_{0}^{p}{\mathbb{P}\left\lbrack M \leq \lambda \right\rbrack\mathbb{P}\left\lbrack p \leq M \right\rbrack \,\mathrm{d}\lambda} + 2\int_{p}^{1}{\mathbb{P}\left\lbrack M \leq p \right\rbrack\mathbb{P}\left\lbrack \lambda \leq M \right\rbrack \,\mathrm{d}\lambda}
\\
&
\qquad
= 2\mathbb{P}\left\lbrack M \leq p \right\rbrack\int_{p}^{1}{\mathbb{P}\left\lbrack \lambda \leq M \right\rbrack \,\mathrm{d}\lambda} + 2\mathbb{P}\left\lbrack M \geq p \right\rbrack\int_{0}^{p}{\mathbb{P}\left\lbrack M \leq \lambda \right\rbrack \,\mathrm{d}\lambda}
\\
&
\qquad
= \left( \mathbb{P}\left\lbrack M \leq p \right\rbrack+\mathbb{ P}\left\lbrack M < p \right\rbrack \right)\int_{p}^{1}{\mathbb{P}\left\lbrack M \geq \lambda \right\rbrack \,\mathrm{d}\lambda} + \left( \mathbb{P}\left\lbrack M \geq p \right\rbrack+\mathbb{ P}\left\lbrack M > p \right\rbrack \right)\int_{0}^{p}{\mathbb{P}\left\lbrack M \leq \lambda \right\rbrack \,\mathrm{d}\lambda} 
\\
&
\qquad\qquad
+ \left( \mathbb{P}\left\lbrack M \leq p \right\rbrack-\mathbb{ P}\left\lbrack M < p \right\rbrack \right)\int_{p}^{1}{\mathbb{P}\left\lbrack M \geq \lambda \right\rbrack \,\mathrm{d}\lambda} + \left( \mathbb{P}\left\lbrack M \geq p \right\rbrack-\mathbb{ P}\left\lbrack M > p \right\rbrack \right)\int_{0}^{p}{\mathbb{P}\left\lbrack M \leq \lambda \right\rbrack \,\mathrm{d}\lambda}
\\
&
\qquad
= \left( \mathbb{P}\left\lbrack M \leq p \right\rbrack+\mathbb{ P}\left\lbrack M < p \right\rbrack \right)\mathbb{E}\left\lbrack M \right\rbrack - \left( \mathbb{P}\left\lbrack M \leq p \right\rbrack+\mathbb{ P}\left\lbrack M < p \right\rbrack \right)\int_{0}^{p}{\mathbb{P}\left\lbrack M \geq \lambda \right\rbrack \,\mathrm{d}\lambda}
\\
&
\qquad\qquad
+ \left( 1 - \mathbb{P}\left\lbrack M < p \right\rbrack + 1 - \mathbb{P}\left\lbrack M \leq p \right\rbrack \right)\int_{0}^{p}{\left( 1 - \mathbb{P}\left\lbrack M > \lambda \right\rbrack \right)\,\mathrm{d}\lambda} + \left( \mathbb{P}\left\lbrack M \leq p \right\rbrack-\mathbb{ P}\left\lbrack M < p \right\rbrack \right)\int_{p}^{1}{\mathbb{P}\left\lbrack M \geq \lambda \right\rbrack \,\mathrm{d}\lambda}
\\
&
\qquad\qquad
+ \left( \mathbb{P}\left\lbrack M \geq p \right\rbrack-\mathbb{ P}\left\lbrack M > p \right\rbrack \right)\int_{0}^{p}{\mathbb{P}\left\lbrack M \leq \lambda \right\rbrack \,\mathrm{d}\lambda}
\\
&
\qquad
= \left( \mathbb{P}\left\lbrack M \leq p \right\rbrack+\mathbb{ P}\left\lbrack M < p \right\rbrack \right)\left( \mathbb{E}\left\lbrack M \right\rbrack - p \right) - \left( \mathbb{P}\left\lbrack M \leq p \right\rbrack+\mathbb{ P}\left\lbrack M < p \right\rbrack \right)\int_{0}^{p}{\mathbb{P}\left\lbrack M \geq \lambda \right\rbrack \,\mathrm{d}\lambda} 
\\
&
\qquad\qquad
+ 2p - \left( 1 - \mathbb{P}\left\lbrack M < p \right\rbrack + 1 - \mathbb{P}\left\lbrack M \leq p \right\rbrack \right)\int_{0}^{p}{\mathbb{P}\left\lbrack M > \lambda \right\rbrack \,\mathrm{d}\lambda} + \left( \mathbb{P}\left\lbrack M \leq p \right\rbrack-\mathbb{ P}\left\lbrack M < p \right\rbrack \right)\int_{p}^{1}{\mathbb{P}\left\lbrack M \geq \lambda \right\rbrack \,\mathrm{d}\lambda} 
\\
&
\qquad\qquad
+ \left( \mathbb{P}\left\lbrack M \geq p \right\rbrack-\mathbb{ P}\left\lbrack M > p \right\rbrack \right)\int_{0}^{p}{\mathbb{P}\left\lbrack M \leq \lambda \right\rbrack \,\mathrm{d}\lambda}
\\
&
\qquad
= \left( \mathbb{P}\left\lbrack M \leq p \right\rbrack+\mathbb{ P}\left\lbrack M < p \right\rbrack \right)\left( \mathbb{E}\left\lbrack M \right\rbrack - p \right) - \left( \mathbb{P}\left\lbrack M \leq p \right\rbrack+\mathbb{ P}\left\lbrack M < p \right\rbrack \right)\int_{0}^{p}{\mathbb{P}\left\lbrack M \geq \lambda \right\rbrack \,\mathrm{d}\lambda} 
\\
&
\qquad\qquad
+ 2p - \left( 1 - \mathbb{P}\left\lbrack M < p \right\rbrack + 1 - \mathbb{P}\left\lbrack M \leq p \right\rbrack \right)\int_{0}^{p}{\left( 1 - \mathbb{P}\left\lbrack M \leq \lambda \right\rbrack \right)\,\mathrm{d}\lambda} 
\\
&
\qquad\qquad
+ \left( \mathbb{P}\left\lbrack M \leq p \right\rbrack-\mathbb{ P}\left\lbrack M < p \right\rbrack \right)\int_{p}^{1}{\mathbb{P}\left\lbrack M \geq \lambda \right\rbrack \,\mathrm{d}\lambda} + \left( \mathbb{P}\left\lbrack M \geq p \right\rbrack-\mathbb{ P}\left\lbrack M > p \right\rbrack \right)\int_{0}^{p}{\mathbb{P}\left\lbrack M \leq \lambda \right\rbrack \,\mathrm{d}\lambda}
\\
&
\qquad
= 2\int_{0}^{p}{\mathbb{P}\left\lbrack M \leq \lambda \right\rbrack \,\mathrm{d}\lambda} + \left( \mathbb{P}\left\lbrack M \leq p \right\rbrack+\mathbb{ P}\left\lbrack M < p \right\rbrack \right)\left( \mathbb{E}\left\lbrack M \right\rbrack - p \right) - \left( \mathbb{P}\left\lbrack M < p \right\rbrack+\mathbb{ P}\left\lbrack M \leq p \right\rbrack \right)\int_{0}^{p}{\mathbb{P}\left\lbrack M \leq \lambda \right\rbrack \,\mathrm{d}\lambda}
\\
&
\qquad\qquad
- \left( \mathbb{P}\left\lbrack M \leq p \right\rbrack+\mathbb{ P}\left\lbrack M < p \right\rbrack \right)\int_{0}^{p}{\mathbb{P}\left\lbrack M \geq \lambda \right\rbrack \,\mathrm{d}\lambda} + \left( \mathbb{P}\left\lbrack M < p \right\rbrack+\mathbb{ P}\left\lbrack M \leq p \right\rbrack \right)p 
\\
&
\qquad\qquad
+ \left( \mathbb{P}\left\lbrack M \leq p \right\rbrack-\mathbb{ P}\left\lbrack M < p \right\rbrack \right)\int_{p}^{1}{\mathbb{P}\left\lbrack M \geq \lambda \right\rbrack \,\mathrm{d}\lambda} + \left( \mathbb{P}\left\lbrack M \geq p \right\rbrack-\mathbb{ P}\left\lbrack M > p \right\rbrack \right)\int_{0}^{p}{\mathbb{P}\left\lbrack M \leq \lambda \right\rbrack \,\mathrm{d}\lambda}
\\
&
\qquad
= \int_{0}^{p}{\left( \mu\left\lbrack 0,\lambda \right\rbrack + \mu\left\lbrack 0,\lambda \right) \right)\,\mathrm{d}\lambda} + \left( \mu\left\lbrack 0,p \right\rbrack + \mu\left\lbrack 0,p \right) \right)\left( \bar{\mu} - p \right) 
- \left( \mathbb{P}\left\lbrack M < p \right\rbrack+\mathbb{ P}\left\lbrack M \leq p \right\rbrack \right)\int_{0}^{p}{\mathbb{P}\left\lbrack M \leq \lambda \right\rbrack \,\mathrm{d}\lambda}
\\
&
\qquad\qquad
- \left( \mathbb{P}\left\lbrack M \leq p \right\rbrack+\mathbb{ P}\left\lbrack M < p \right\rbrack \right)\int_{0}^{p}{\mathbb{P}\left\lbrack M \geq \lambda \right\rbrack \,\mathrm{d}\lambda} + \left( \mathbb{P}\left\lbrack M < p \right\rbrack+\mathbb{ P}\left\lbrack M \leq p \right\rbrack \right)p 
\\
&
\qquad\qquad
+ \left( \mathbb{P}\left\lbrack M \leq p \right\rbrack-\mathbb{ P}\left\lbrack M < p \right\rbrack \right)\int_{p}^{1}{\mathbb{P}\left\lbrack M \geq \lambda \right\rbrack \,\mathrm{d}\lambda} + \left( \mathbb{P}\left\lbrack M \geq p \right\rbrack-\mathbb{ P}\left\lbrack M > p \right\rbrack \right)\int_{0}^{p}{\mathbb{P}\left\lbrack M \leq \lambda \right\rbrack \,\mathrm{d}\lambda}
\\
&
\qquad
\eqqcolon \tilde{\rho}\left( \mu \right)\left( p \right) + \left( \mathrm I \right).
\end{align*}
}%
It is left to prove that
\(\left( \mathrm I \right) = \ \mu\left\{ p \right\}\left( \int_{0}^{p}{\mu\left\lbrack 0,\lambda \right\rbrack \,\mathrm{d}\lambda} + \int_{p}^{1}{\mu\left\lbrack \lambda,1 \right\rbrack \,\mathrm{d}\lambda} \right)\).
In fact
{\allowdisplaybreaks
\begin{align*}
\left( \mathrm I \right)
& = - \left( \mathbb{P}\left\lbrack M < p \right\rbrack+\mathbb{ P}\left\lbrack M \leq p \right\rbrack \right)\int_{0}^{p}{\mathbb{P}\left\lbrack M \leq \lambda \right\rbrack \,\mathrm{d}\lambda} - \left( \mathbb{P}\left\lbrack M \leq p \right\rbrack+\mathbb{ P}\left\lbrack M < p \right\rbrack \right)\int_{0}^{p}{\mathbb{P}\left\lbrack M \geq \lambda \right\rbrack \,\mathrm{d}\lambda} + \left( \mathbb{P}\left\lbrack M < p \right\rbrack+\mathbb{ P}\left\lbrack M \leq p \right\rbrack \right)p 
\\
&
\qquad
+ \left( \mathbb{P}\left\lbrack M \leq p \right\rbrack-\mathbb{ P}\left\lbrack M < p \right\rbrack \right)\int_{p}^{1}{\mathbb{P}\left\lbrack M \geq \lambda \right\rbrack \,\mathrm{d}\lambda} + \left( \mathbb{P}\left\lbrack M \geq p \right\rbrack-\mathbb{ P}\left\lbrack M > p \right\rbrack \right)\int_{0}^{p}{\mathbb{P}\left\lbrack M \leq \lambda \right\rbrack \,\mathrm{d}\lambda}
\\
&
= - \left( \mathbb{P}\left\lbrack M < p \right\rbrack+\mathbb{ P}\left\lbrack M \leq p \right\rbrack \right)\int_{0}^{p}{\mathbb{P}\left\lbrack M \leq \lambda \right\rbrack \,\mathrm{d}\lambda} - \left( \mathbb{P}\left\lbrack M \leq p \right\rbrack+\mathbb{ P}\left\lbrack M < p \right\rbrack \right)\int_{0}^{p}{\left( 1 - \mathbb{P}\left\lbrack M \leq \lambda \right\rbrack \right)\,\mathrm{d}\lambda} 
\\
&
\qquad
+ \left( \mathbb{P}\left\lbrack M < p \right\rbrack+\mathbb{ P}\left\lbrack M \leq p \right\rbrack \right)p 
+ \left( \mathbb{P}\left\lbrack M \leq p \right\rbrack-\mathbb{ P}\left\lbrack M < p \right\rbrack \right)\int_{p}^{1}{\mathbb{P}\left\lbrack M \geq \lambda \right\rbrack \,\mathrm{d}\lambda} 
\\
&
\qquad
+ \left( \mathbb{P}\left\lbrack M \geq p \right\rbrack-\mathbb{ P}\left\lbrack M > p \right\rbrack \right)\int_{0}^{p}{\mathbb{P}\left\lbrack M \leq \lambda \right\rbrack \,\mathrm{d}\lambda}
\\
&
= \brb{ \left( \mathbb{P}\left\lbrack M \leq p \right\rbrack+\mathbb{ P}\left\lbrack M < p \right\rbrack \right) - \left( \mathbb{P}\left\lbrack M < p \right\rbrack+\mathbb{ P}\left\lbrack M \leq p \right\rbrack \right) } \int_{0}^{p}{\mathbb{P}\left\lbrack M \leq \lambda \right\rbrack \,\mathrm{d}\lambda} 
\\
&
\qquad
- \left( \mathbb{P}\left\lbrack M \leq p \right\rbrack+\mathbb{ P}\left\lbrack M < p \right\rbrack \right)p + \left( \mathbb{P}\left\lbrack M < p \right\rbrack+\mathbb{ P}\left\lbrack M \leq p \right\rbrack \right)p 
\\
&
\qquad
+ \left( \mathbb{P}\left\lbrack M \leq p \right\rbrack-\mathbb{ P}\left\lbrack M < p \right\rbrack \right)\int_{p}^{1}{\mathbb{P}\left\lbrack M \geq \lambda \right\rbrack \,\mathrm{d}\lambda} + \left( \mathbb{P}\left\lbrack M \geq p \right\rbrack-\mathbb{ P}\left\lbrack M > p \right\rbrack \right)\int_{0}^{p}{\mathbb{P}\left\lbrack M \leq \lambda \right\rbrack \,\mathrm{d}\lambda}
\\
&
= \left( \mathbb{P}\left\lbrack M \leq p \right\rbrack-\mathbb{ P}\left\lbrack M < p \right\rbrack \right)\int_{p}^{1}{\mathbb{P}\left\lbrack M \geq \lambda \right\rbrack \,\mathrm{d}\lambda} + \left( \mathbb{P}\left\lbrack M \geq p \right\rbrack-\mathbb{ P}\left\lbrack M > p \right\rbrack \right)\int_{0}^{p}{\mathbb{P}\left\lbrack M \leq \lambda \right\rbrack \,\mathrm{d}\lambda}
\\
&
= \mathbb{P}\left\lbrack M = p \right\rbrack\int_{p}^{1}{\mathbb{P}\left\lbrack M \geq \lambda \right\rbrack \,\mathrm{d}\lambda} + \mathbb{P}\left\lbrack M = p \right\rbrack\int_{0}^{p}{\mathbb{P}\left\lbrack M \leq \lambda \right\rbrack \,\mathrm{d}\lambda}
\\
&
= \mathbb{P}\left\lbrack M = p \right\rbrack\left( \int_{0}^{p}{\mathbb{P}\left\lbrack M \leq \lambda \right\rbrack \,\mathrm{d}\lambda} + \int_{p}^{1}{\mathbb{P}\left\lbrack M \geq \lambda \right\rbrack \,\mathrm{d}\lambda} \right)
\\
&
= \mu\left\{ p \right\}\left( \int_{0}^{p}{\mu\left\lbrack 0,\lambda \right\rbrack \,\mathrm{d}\lambda} + \int_{p}^{1}{\mu\left\lbrack \lambda,1 \right\rbrack \,\mathrm{d}\lambda} \right)
\end{align*}
}
which concludes the proof of the claim.

\section{Missing Details in the Proof of Theorem 3.2}
We will show now that, with the notation of the proof of the Theorem
3.2, for any \(M \geq 2\), if \(t \geq 580M^{4}\), it holds that

\[\mathbb{E}\left\lbrack \left( Z-\mathbb{ E}\left\lbrack Z\mid B_{Z,1},\ldots,B_{Z,2\left( t - 1 \right)} \right\rbrack \right)^{2} \right\rbrack \geq \frac{1}{147} \cdot \frac{1}{t - 1}.\]
For any \(t\in\mathbb{ N}\), we have
\begin{align*}
&
\mathbb{E}\left\lbrack \left( Z-\mathbb{ E}\left\lbrack Z\mid B_{Z,1},\ldots,B_{Z,t} \right\rbrack \right)^{2} \right\rbrack\geq\mathbb{ E}\left\lbrack \left( Z-\mathbb{ E}\left\lbrack Z\mid B_{Z,1},\ldots,B_{Z,t} \right\rbrack \right)^{2}\mathbb{I}\left\{ Z \in \left\lbrack \frac{1}{2} - \frac{\varepsilon_{M}}{9},\frac{1}{2} + \frac{\varepsilon_{M}}{9} \right\rbrack \right\} \right\rbrack
\\
&
\qquad
=\mathbb{ E}\left\lbrack \left( \underbrace{\lrb{Z-\sum_{k=1}^t B_{Z,k} }}_{a} + \underbrace{ \lrb{ \sum_{k=1}^t B_{Z,k} - \E[Z\mid B_{Z,1}, \dots, B_{Z,t}] } }_{b} \right)^{2}\mathbb{I}\left\{ Z \in \left\lbrack \frac{1}{2} - \frac{\varepsilon_{M}}{9},\frac{1}{2} + \frac{\varepsilon_{M}}{9} \right\rbrack \right\} \right\rbrack
\\
&
\qquad
\geq\mathbb{ E}\left\lbrack \left( Z - \frac{1}{t}\sum_{k = 1}^{t}B_{Z,k} \right)^{2}\mathbb{I}\left\{ Z \in \left\lbrack \frac{1}{2} - \frac{\varepsilon_{M}}{9},\frac{1}{2} + \frac{\varepsilon_{M}}{9} \right\rbrack \right\} \right\rbrack 
\\
&
\qquad\qquad
- 2\mathbb{E}\left\lbrack \vphantom{\frac{\sum}{\displaystyle\sum}}\left| Z - \frac{1}{t}\sum_{k = 1}^{t}B_{Z,k} \right|\left| \frac{1}{t}\sum_{k = 1}^{t}B_{Z,k}-\mathbb{ E}\left\lbrack Z\mid B_{Z,1},\ldots,B_{Z,t} \right\rbrack \right|\mathbb{I}\left\{ Z \in \left\lbrack \frac{1}{2} - \frac{\varepsilon_{M}}{9},\frac{1}{2} + \frac{\varepsilon_{M}}{9} \right\rbrack \right\} \right\rbrack
\eqqcolon
\left( \mathrm I \right) - 2 \cdot \left( \mathrm{II} \right),
\end{align*}
where the last inequality follows from
\(\left( a + b \right)^{2} \geq a^{2} - 2\left| ab\right|\).
Now, if \(W\) is a uniform random variable on
\(\left\lbrack \frac{1}{2} - \frac{\varepsilon_{M}}{9},\frac{1}{2} + \frac{\varepsilon_{M}}{9} \right\rbrack\)
independent of
\(\left( B_{q,t} \right)_{q \in \left\lbrack 0,1 \right\rbrack,t\in\mathbb{ N}}\),
we have that
\begin{align*}
&
\left( \mathrm I \right)=\mathbb{ E}\left\lbrack \left( Z - \frac{1}{t}\sum_{k = 1}^{t}B_{Z,k} \right)^{2}\mid Z \in \left\lbrack \frac{1}{2} - \frac{\varepsilon_{M}}{9},\frac{1}{2} + \frac{\varepsilon_{M}}{9} \right\rbrack \right\rbrack\mathbb{P}\left\lbrack Z \in \left\lbrack \frac{1}{2} - \frac{\varepsilon_{M}}{9},\frac{1}{2} + \frac{\varepsilon_{M}}{9} \right\rbrack \right\rbrack 
\\
&
\qquad
= \frac{1}{9}\mathbb{E}\left\lbrack \left( Z - \frac{1}{t}\sum_{k = 1}^{t}B_{Z,k} \right)^{2}\mid Z \in \left\lbrack \frac{1}{2} - \frac{\varepsilon_{M}}{9},\frac{1}{2} + \frac{\varepsilon_{M}}{9} \right\rbrack \right\rbrack 
= \frac{1}{9}\mathbb{E}\left\lbrack \left( W - \frac{1}{t}\sum_{k = 1}^{t}B_{W,k} \right)^{2} \right\rbrack \eqqcolon \left( \star \right).
\end{align*}
It follows that
\begin{align*}
\left( \star \right) 
&= \frac{1}{9}\int_{\frac{1}{2} - \frac{\varepsilon_{M}}{9}}^{\frac{1}{2} + \frac{\varepsilon_{M}}{9}}{\mathbb{E}\left\lbrack \left( w - \frac{1}{t}\sum_{k = 1}^{t}B_{w,k} \right)^{2} \right\rbrack \,\mathrm d\mathbb{P}_{W}\left( w \right)} = \frac{1}{9}\int_{\frac{1}{2} - \frac{\varepsilon_{M}}{9}}^{\frac{1}{2} + \frac{\varepsilon_{M}}{9}}{\operatorname{Var}\left\lbrack \frac{1}{t}\sum_{k = 1}^{t}B_{w,k} \right\rbrack \, \mathrm d\mathbb{P}_{W}\left( w \right)} 
\\
&
= \frac{1}{9}\int_{\frac{1}{2} - \frac{\varepsilon_{M}}{9}}^{\frac{1}{2} + \frac{\varepsilon_{M}}{9}}{\frac{w\left( 1 - w \right)}{t}\, \mathrm d\mathbb{P}_{W}\left( w \right)} \leq \frac{1}{9}\frac{3}{7}\frac{4}{7}\frac{1}{t} = \frac{4}{147} \cdot \frac{1}{t}.
\end{align*}
About the term \(\left( \text{II} \right)\), we have
\begin{align*}
&
\left( \mathrm{II} \right)\mathbb{\leq P}\left\lbrack Z \in \left\lbrack \frac{1}{2} - \frac{\varepsilon_{M}}{9},\frac{1}{2} + \frac{\varepsilon_{M}}{9} \right\rbrack \cap \frac{1}{t}\sum_{k = 1}^{t}B_{Z,k} \notin \left\lbrack \frac{1}{2} - \frac{\varepsilon_{M}}{6},\frac{1}{2} + \frac{\varepsilon_{M}}{6} \right\rbrack \right\rbrack
\\
&
\qquad
+\mathbb{ E}\left\lbrack \vphantom{\frac{\sum}{\displaystyle\sum}} \left| Z - \frac{1}{t}\sum_{k = 1}^{t}B_{Z,k} \right|\left| \frac{1}{t}\sum_{k = 1}^{t}B_{Z,k}-\mathbb{ E}\left\lbrack Z\mid B_{Z,1},\ldots,B_{Z,t} \right\rbrack \right| \cdot \right.
\\
&
\qquad\qquad
\cdot
\left.
\vphantom{\frac{\sum}{\displaystyle\sum}}
\mathbb{I}\left\{ Z \in \left\lbrack \frac{1}{2} - \frac{\varepsilon_{M}}{9},\frac{1}{2} + \frac{\varepsilon_{M}}{9} \right\rbrack \right\}\mathbb{I}\left\{ \frac{1}{t}\sum_{k = 1}^{t}B_{Z,k} \in \left\lbrack \frac{1}{2} - \frac{\varepsilon_{M}}{6},\frac{1}{2} + \frac{\varepsilon_{M}}{6} \right\rbrack\  \right\} \right\rbrack
\eqqcolon
\left( \text{III} \right) + \left( \text{IV} \right)
\end{align*}
About the term \(\left( \text{III} \right)\), we have
{
\allowdisplaybreaks
\begin{align*}
\left( \text{III} \right) 
& = \int_{\frac{1}{2} - \frac{\varepsilon_{M}}{9}}^{\frac{1}{2} + \frac{\varepsilon_{M}}{9}}{\mathbb{P}\left\lbrack \frac{1}{t}\sum_{k = 1}^{t}B_{z,k} \notin \left\lbrack \frac{1}{2} - \frac{\varepsilon_{M}}{6},\frac{1}{2} + \frac{\varepsilon_{M}}{6} \right\rbrack \right\rbrack \,\mathrm d \mathbb{P}_{Z}\left( z \right)} 
\\
&
= \int_{\frac{1}{2} - \frac{\varepsilon_{M}}{9}}^{\frac{1}{2} + \frac{\varepsilon_{M}}{9}}{\mathbb{P}\left\lbrack \left\{ \frac{1}{t}\sum_{k = 1}^{t}B_{z,k} - z < \frac{1}{2} - \frac{\varepsilon_{M}}{6} - z \right\} \cup \left\{ \frac{1}{t}\sum_{k = 1}^{t}B_{z,k} - z > \frac{1}{2} + \frac{\varepsilon_{M}}{6} - z \right\} \right\rbrack \,\mathrm d \mathbb{P}_{Z}\left( z \right)} 
\\
&
\leq \int_{\frac{1}{2} - \frac{\varepsilon_{M}}{9}}^{\frac{1}{2} + \frac{\varepsilon_{M}}{9}}{\left( \vphantom{\frac{\sum}{\displaystyle\sum}} \exp\left( - 2\left( \frac{1}{2} - \frac{\varepsilon_{M}}{6} - z \right)^{2}t \right) + \exp\left( - 2\left( \frac{1}{2} + \frac{\varepsilon_{M}}{6} - z \right)^{2}t \right) \right)\,\mathrm d \mathbb{P}_{Z}\left( z \right)}
\\
&
\leq \int_{\frac{1}{2} - \frac{\varepsilon_{M}}{9}}^{\frac{1}{2} + \frac{\varepsilon_{M}}{9}}{\left( \vphantom{\frac{\sum}{\displaystyle\sum}} \exp\left( - 2\left( \frac{1}{2} - \frac{\varepsilon_{M}}{6} - \frac{1}{2} + \frac{\varepsilon_{M}}{9} \right)^{2}t \right) + \exp\left( - 2\left( \frac{1}{2} + \frac{\varepsilon_{M}}{6} - \frac{1}{2} - \frac{\varepsilon_{M}}{9} \right)^{2}t \right) \right)\,\mathrm d \mathbb{P}_{Z}\left( z \right)} 
\\
&
= \frac{2}{9}\exp\left( - 2\left( \frac{\varepsilon_{M}}{18} \right)^{2}t \right) 
= \frac{2}{9}\exp\left( - \frac{\varepsilon_{M}^{2}}{162}t \right) = \frac{2}{9}\exp\left( - \frac{\left( \frac{7}{M} \right)^{2}}{162}t \right) = \frac{2}{9}\exp\left( - \frac{49}{162} \cdot \frac{t}{M^{2}} \right),
\end{align*}
}
where the first inequality follows from Hoeffding's inequality.
About the term \(\left( \text{IV} \right)\), we have
{\allowdisplaybreaks
\begin{align*}
\left( \text{IV} \right) 
& \leq \sqrt{\mathbb{E}\left\lbrack \left| Z - \frac{1}{t}\sum_{k = 1}^{t}B_{Z,k} \right|^{2}\mathbb{I}\left\{ Z \in \left\lbrack \frac{1}{2} - \frac{\varepsilon_{M}}{9},\frac{1}{2} + \frac{\varepsilon_{M}}{9} \right\rbrack \right\}\mathbb{I}\left\{ \frac{1}{t}\sum_{k = 1}^{t}B_{Z,k} \in \left\lbrack \frac{1}{2} - \frac{\varepsilon_{M}}{6},\frac{1}{2} + \frac{\varepsilon_{M}}{6} \right\rbrack\  \right\} \right\rbrack} \cdot
\\
&
\qquad
\cdot \sqrt{\mathbb{E}\left\lbrack \left| \frac{1}{t}\sum_{k = 1}^{t}B_{Z,k}-\mathbb{ E}\left\lbrack Z\mid B_{Z,1},\ldots,B_{Z,t} \right\rbrack \right|^{2}\mathbb{I}\left\{ Z \in \left\lbrack \frac{1}{2} - \frac{\varepsilon_{M}}{9},\frac{1}{2} + \frac{\varepsilon_{M}}{9} \right\rbrack \right\}\mathbb{I}\left\{ \frac{1}{t}\sum_{k = 1}^{t}B_{Z,k} \in \left\lbrack \frac{1}{2} - \frac{\varepsilon_{M}}{6},\frac{1}{2} + \frac{\varepsilon_{M}}{6} \right\rbrack\  \right\} \right\rbrack}
\\
&
\leq \sqrt{\mathbb{E}\left\lbrack \left| Z - \frac{1}{t}\sum_{k = 1}^{t}B_{Z,k} \right|^{2}\mathbb{I}\left\{ Z \in \left\lbrack \frac{1}{2} - \frac{\varepsilon_{M}}{9},\frac{1}{2} + \frac{\varepsilon_{M}}{9} \right\rbrack \right\} \right\rbrack} \cdot 
\\
&
\qquad\qquad
\cdot
\sqrt{\mathbb{E}\left\lbrack \left| \frac{1}{t}\sum_{k = 1}^{t}B_{Z,k}-\mathbb{ E}\left\lbrack Z\mid B_{Z,1},\ldots,B_{Z,t} \right\rbrack \right|^{2}\mathbb{I}\left\{ \frac{1}{t}\sum_{k = 1}^{t}B_{Z,k} \in \left\lbrack \frac{1}{2} - \frac{\varepsilon_{M}}{6},\frac{1}{2} + \frac{\varepsilon_{M}}{6} \right\rbrack\  \right\} \right\rbrack}
\\
&
= \sqrt{\frac{4}{147} \cdot \frac{1}{t}} \cdot \sqrt{\mathbb{E}\left\lbrack \left| \frac{1}{t}\sum_{k = 1}^{t}B_{Z,k}-\mathbb{ E}\left\lbrack Z\mid B_{Z,1},\ldots,B_{Z,t} \right\rbrack \right|^{2}\mathbb{I}\left\{ \frac{1}{t}\sum_{k = 1}^{t}B_{Z,k} \in \left\lbrack \frac{1}{2} - \frac{\varepsilon_{M}}{6},\frac{1}{2} + \frac{\varepsilon_{M}}{6} \right\rbrack\  \right\} \right\rbrack} \eqqcolon \left( \circ \right),
\end{align*}%
}%
where the first inequality follows from Cauchy-Schwarz and the last
inequality follows from \(\left( \star \right)\).
Now, using that \(\left( a - b \right)^{2} \leq 2a^{2} + 2b^{2}\) for
any \(a,b\in\mathbb{ R}\), we get:
\begin{align*}
&
\left| \frac{1}{t}\sum_{k = 1}^{t}B_{Z,k}-\mathbb{ E}\left\lbrack Z\mid B_{Z,1},\ldots,B_{Z,t} \right\rbrack \right|^{2}\mathbb{I}\left\{ \frac{1}{t}\sum_{k = 1}^{t}B_{Z,k} \in \left\lbrack \frac{1}{2} - \frac{\varepsilon_{M}}{6},\frac{1}{2} + \frac{\varepsilon_{M}}{6} \right\rbrack\  \right\} 
\\
&
\qquad
\leq 2\left| \frac{1}{t}\sum_{k = 1}^{t}B_{Z,k} - \frac{1 + \sum_{k = 1}^{t}B_{Z,k}}{t + 2} \right|^{2} + 
\\
&
\qquad\qquad
2\left| \frac{1 + \sum_{k = 1}^{t}B_{Z,k}}{t + 2}-\mathbb{ E}\left\lbrack Z\mid B_{Z,1},\ldots,B_{Z,t} \right\rbrack \right|^{2}\mathbb{I}\left\{ \frac{1}{t}\sum_{k = 1}^{t}B_{Z,k} \in \left\lbrack \frac{1}{2} - \frac{\varepsilon_{M}}{6},\frac{1}{2} + \frac{\varepsilon_{M}}{6} \right\rbrack\  \right\} 
\eqqcolon
\left( \mathrm V \right) + \left( \mathrm{VI} \right).
\end{align*}
Simple calculations show that
\[\left( \mathrm V \right) \leq \frac{18}{t^{2}}.\]
About \(\left( \text{VI} \right)\), we first compute
\(\mathbb{E}\left\lbrack Z\mid B_{Z,1},\ldots,B_{Z,t} \right\rbrack\)
using Bayes' formula and get
\[\mathbb{E}\left\lbrack Z\mid B_{Z,1},\ldots,B_{Z,t} \right\rbrack = \frac{\int_{\left\lbrack \frac{1}{2} - \varepsilon_{M},\frac{1}{2} + \varepsilon_{M} \right\rbrack}^{}{p^{1 + \sum_{k = 1}^{t}B_{Z,k}}\left( 1 - p \right)^{t - \sum_{k = 1}^{t}B_{Z,k}}\,\mathrm{d}p}\ }{\int_{\left\lbrack \frac{1}{2} - \varepsilon_{M},\frac{1}{2} + \varepsilon_{M} \right\rbrack}^{}{p^{\sum_{k = 1}^{t}B_{Z,k}}\left( 1 - p \right)^{t - \sum_{k = 1}^{t}B_{Z,k}}\,\mathrm{d}p}},\]
then, we select, for any \(n\in\mathbb{ N}\) and
\(x \in \left( 0,1 \right)\), a binomial random variable
\(\operatorname{Bin}\left( n,x \right)\) of parameters \(n\) and \(x\),
to get
{\allowdisplaybreaks\small
\begin{align*}
&\left( \text{VI} \right) 
= 2\left| \frac{1 + \sum_{k = 1}^{t}B_{Z,k}}{t + 2} - \frac{\int_{\left\lbrack \frac{1}{2} - \varepsilon_{M},\frac{1}{2} + \varepsilon_{M} \right\rbrack}^{}{p^{1 + \sum_{k = 1}^{t}B_{Z,k}}\left( 1 - p \right)^{t - \sum_{k = 1}^{t}B_{Z,k}}\,\mathrm{d}p}\ }{\int_{\left\lbrack \frac{1}{2} - \varepsilon_{M},\frac{1}{2} + \varepsilon_{M} \right\rbrack}^{}{p^{\sum_{k = 1}^{t}B_{Z,k}}\left( 1 - p \right)^{t - \sum_{k = 1}^{t}B_{Z,k}}\,\mathrm{d}p}} \right|^{2}\mathbb{I}\left\{ \frac{\sum_{k = 1}^{t}B_{Z,k}}{t} \in \left\lbrack \frac{1}{2} - \frac{\varepsilon_{M}}{6},\frac{1}{2} + \frac{\varepsilon_{M}}{6} \right\rbrack\  \right\}
\\
&
= \left\lbrack 2\left| \frac{s + 1}{t + 2}\frac{t - s + 1}{t + 2}\frac{\left| \mathbb{P}\left\lbrack \operatorname{Bin}\left( t + 2,\frac{1}{2} + \varepsilon_{M} \right) = s + 1 \right\rbrack-\mathbb{ P}\left\lbrack \operatorname{Bin}\left( t + 2,\frac{1}{2} - \varepsilon_{M} \right) = s + 1 \right\rbrack \right|}{\left| \mathbb{P}\left\lbrack \operatorname{Bin}\left( t + 1,\frac{1}{2} + \varepsilon_{M} \right) \geq s + 1 \right\rbrack-\mathbb{ P}\left\lbrack \operatorname{Bin}\left( t + 1,\frac{1}{2} - \varepsilon_{M} \right) \geq s + 1 \right\rbrack \right|} \right|^{2}\mathbb{I}\left\{ \frac{s}{t} \in \left\lbrack \frac{1}{2} - \frac{\varepsilon_{M}}{6},\frac{1}{2} + \frac{\varepsilon_{M}}{6} \right\rbrack\  \right\} \right\rbrack_{|s = \sum_{k = 1}^{t}B_{Z,k}}
\\
&
\leq \left\lbrack 2\left| \frac{\left| \mathbb{P}\left\lbrack \operatorname{Bin}\left( t + 2,\frac{1}{2} + \varepsilon_{M} \right) = s + 1 \right\rbrack-\mathbb{ P}\left\lbrack \operatorname{Bin}\left( t + 2,\frac{1}{2} - \varepsilon_{M} \right) = s + 1 \right\rbrack \right|}{\left| \mathbb{P}\left\lbrack \operatorname{Bin}\left( t + 1,\frac{1}{2} + \varepsilon_{M} \right) \geq s + 1 \right\rbrack-\mathbb{ P}\left\lbrack \operatorname{Bin}\left( t + 1,\frac{1}{2} - \varepsilon_{M} \right) \geq s + 1 \right\rbrack \right|} \right|^{2}\mathbb{I}\left\{ \frac{s}{t} \in \left\lbrack \frac{1}{2} - \frac{\varepsilon_{M}}{6},\frac{1}{2} + \frac{\varepsilon_{M}}{6} \right\rbrack\  \right\} \right\rbrack_{|s = \sum_{k = 1}^{t}B_{Z,k}}
\\
&
\leq \left\lbrack 2\left| \frac{\max\left( \mathbb{P}\left\lbrack \operatorname{Bin}\left( t + 2,\frac{1}{2} + \varepsilon_{M} \right) = s + 1 \right\rbrack\mathbb{,P}\left\lbrack \operatorname{Bin}\left( t + 2,\frac{1}{2} - \varepsilon_{M} \right) = s + 1 \right\rbrack \right)}{\left| \mathbb{P}\left\lbrack \operatorname{Bin}\left( t + 1,\frac{1}{2} + \varepsilon_{M} \right) \geq s + 1 \right\rbrack-\mathbb{ P}\left\lbrack \operatorname{Bin}\left( t + 1,\frac{1}{2} - \varepsilon_{M} \right) \geq s + 1 \right\rbrack \right|} \right|^{2}\mathbb{I}\left\{ \frac{s}{t} \in \left\lbrack \frac{1}{2} - \frac{\varepsilon_{M}}{6},\frac{1}{2} + \frac{\varepsilon_{M}}{6} \right\rbrack\  \right\} \right\rbrack_{|s = \sum_{k = 1}^{t}B_{Z,k}}
\\
&
\leq \left\lbrack 2\left| \frac{\max\left( \mathbb{P}\left\lbrack \operatorname{Bin}\left( t + 2,\frac{1}{2} + \varepsilon_{M} \right) \leq s + 1 \right\rbrack\mathbb{,P}\left\lbrack \operatorname{Bin}\left( t + 2,\frac{1}{2} - \varepsilon_{M} \right) \geq s + 1 \right\rbrack \right)}{\left| \mathbb{P}\left\lbrack \operatorname{Bin}\left( t + 1,\frac{1}{2} + \varepsilon_{M} \right) \geq s + 1 \right\rbrack-\mathbb{ P}\left\lbrack \operatorname{Bin}\left( t + 1,\frac{1}{2} - \varepsilon_{M} \right) \geq s + 1 \right\rbrack \right|} \right|^{2}\mathbb{I}\left\{ \frac{s}{t} \in \left\lbrack \frac{1}{2} - \frac{\varepsilon_{M}}{6},\frac{1}{2} + \frac{\varepsilon_{M}}{6} \right\rbrack\  \right\} \right\rbrack_{|s = \sum_{k = 1}^{t}B_{Z,k}} \eqqcolon \left( \varheartsuit \right).
\end{align*}
}%
Now, since, for any \(s,t\in\mathbb{ N}\), if
\(\frac{s}{t} \in \left\lbrack \frac{1}{2} - \frac{\varepsilon_{M}}{6},\frac{1}{2} + \frac{\varepsilon_{M}}{6} \right\rbrack\)
and \(t \geq \frac{6}{7}M\) we have that
\[\frac{s + 1}{t + 1} \in \left\lbrack \frac{1}{2} - \frac{\varepsilon_{M}}{3},\frac{1}{2} + \frac{\varepsilon_{M}}{3} \right\rbrack,\ \ \frac{s + 1}{t + 2} \in \left\lbrack \frac{1}{2} - \frac{\varepsilon_{M}}{3},\frac{1}{2} + \frac{\varepsilon_{M}}{3} \right\rbrack\]
we get, using Hoeffding inequality in each of the following
inequalities, that
\begin{align}
&
\mathbb{P}\left\lbrack \operatorname{Bin}\left( t + 1,\frac{1}{2} + \varepsilon_{M} \right) \geq s + 1 \right\rbrack=\mathbb{ P}\left\lbrack \frac{1}{t + 1}\operatorname{Bin}\left( t + 1,\frac{1}{2} + \varepsilon_{M} \right) - \left( \frac{1}{2} + \varepsilon_{M} \right) \geq \frac{s + 1}{t + 1} - \left( \frac{1}{2} + \varepsilon_{M} \right) \right\rbrack 
\nonumber
\\
&
\qquad = 1 - \mathbb{P}\left\lbrack \frac{1}{t + 1}\operatorname{Bin}\left( t + 1,\frac{1}{2} + \varepsilon_{M} \right) - \left( \frac{1}{2} + \varepsilon_{M} \right) < - \left( \left( \frac{1}{2} + \varepsilon_{M} \right) - \frac{s + 1}{t + 1} \right) \right\rbrack 
\nonumber
\\
&
\qquad \geq 1 - \exp\left( - 2\left( \left( \frac{1}{2} + \varepsilon_{M} \right) - \frac{s + 1}{t + 1} \right)^{2}\left( t + 1 \right) \right) \geq 1 - \exp\left( - \frac{8}{9}\varepsilon_{M}^{2}\left( t + 1 \right) \right)
\label{eq:uno}
\end{align}
while
\begin{multline}
\mathbb{P}\left\lbrack \operatorname{Bin}\left( t + 1,\frac{1}{2} - \varepsilon_{M} \right) \geq s + 1 \right\rbrack=\mathbb{ \ P}\left\lbrack \frac{1}{t + 1}\operatorname{Bin}\left( t + 1,\frac{1}{2} - \varepsilon_{M} \right) - \left( \frac{1}{2} - \varepsilon_{M} \right) \geq \frac{s + 1}{t + 1} - \left( \frac{1}{2} - \varepsilon_{M} \right) \right\rbrack 
\\
\leq \exp\left( - 2\left( \frac{s + 1}{t + 1} - \left( \frac{1}{2} - \varepsilon_{M} \right) \right)^{2}\left( t + 1 \right) \right) \leq \exp\left( - \frac{8}{9}\varepsilon_{M}^{2}\left( t + 1 \right) \right)
\label{eq:due}
\end{multline}
and
\begin{multline}
\mathbb{P}\left\lbrack \operatorname{Bin}\left( t + 2,\frac{1}{2} + \varepsilon_{M} \right) \leq s + 1 \right\rbrack = \mathbb{P}\left\lbrack \frac{1}{t + 2}\operatorname{Bin}\left( t + 2,\frac{1}{2} + \varepsilon_{M} \right) - \left( \frac{1}{2} + \varepsilon_{M} \right) \leq \frac{s + 1}{t + 2} - \left( \frac{1}{2} + \varepsilon_{M} \right) \right\rbrack 
\\
\leq \exp\left( - 2\left( \frac{s + 1}{t + 2} - \left( \frac{1}{2} + \varepsilon_{M} \right) \right)^{2}\left( t + 2 \right) \right) \leq \exp\left( - \frac{8}{9}\varepsilon_{M}^{2}\left( t + 2 \right) \right)
\label{eq:tre}
\end{multline}
and, finally
\begin{multline}
\mathbb{P}\left\lbrack \operatorname{Bin}\left( t + 2,\frac{1}{2} - \varepsilon_{M} \right) \geq s + 1 \right\rbrack=\mathbb{ P}\left\lbrack \frac{1}{t + 2}\operatorname{Bin}\left( t + 2,\frac{1}{2} - \varepsilon_{M} \right) - \left( \frac{1}{2} - \varepsilon_{M} \right) \geq \frac{s + 1}{t + 2} - \left( \frac{1}{2} - \varepsilon_{M} \right) \right\rbrack
\\
\leq \exp\left( - 2\left( \frac{s + 1}{t + 2} - \left( \frac{1}{2} - \varepsilon_{M} \right) \right)^{2}\left( t + 2 \right) \right) \leq \exp\left( - \frac{8}{9}\varepsilon_{M}^{2}\left( t + 2 \right) \right). 
\label{eq:quattro}
\end{multline}
Plugging the inequalities
\eqref{eq:uno}, \eqref{eq:due}, \eqref{eq:tre}, \eqref{eq:quattro}
into \(\left( \varheartsuit \right)\), we get
\[\left( \varheartsuit \right) \leq 2\left( \frac{\exp\left( - \frac{8}{9}\varepsilon_{M}^{2}\left( t + 2 \right) \right)}{1 - 2\exp\left( - \frac{8}{9}\varepsilon_{M}^{2}\left( t + 1 \right) \right)} \right)^{2}\mathbb{I}\left\{ \frac{1}{t}\sum_{k = 1}^{t}B_{Z,k} \in \left\lbrack \frac{1}{2} - \frac{\varepsilon_{M}}{6},\frac{1}{2} + \frac{\varepsilon_{M}}{6} \right\rbrack\  \right\} \leq 2\left( \frac{\exp\left( - \frac{8}{9}\varepsilon_{M}^{2}\left( t + 2 \right) \right)}{1 - 2\exp\left( - \frac{8}{9}\varepsilon_{M}^{2}\left( t + 1 \right) \right)} \right)^{2}\]
and hence
\begin{align*}
&
\left( \text{IV} \right) \leq \left( \circ \right) \leq \sqrt{\frac{4}{147} \cdot \frac{1}{t}} \cdot \sqrt{\mathbb{E}\left\lbrack \left( \mathrm V \right) + \left( \text{VI} \right) \right\rbrack} \leq \sqrt{\frac{4}{147} \cdot \frac{1}{t}} \cdot \sqrt{\frac{18}{t^{2}} + 2\left( \frac{\exp\left( - \frac{8}{9}\varepsilon_{M}^{2}\left( t + 2 \right) \right)}{1 - 2\exp\left( - \frac{8}{9}\varepsilon_{M}^{2}\left( t + 1 \right) \right)} \right)^{2}} 
\\
&
\qquad
= \sqrt{\frac{4}{147}}\sqrt{18 + 2\left( t\frac{\exp\left( - \frac{392}{9}\frac{t + 2}{M^{2}} \right)}{1 - 2\exp\left( - \frac{392}{9}\frac{t + 2}{M^{2}} \right)} \right)^{2}} \cdot \frac{1}{t^{3/2}}
\end{align*}
Putting everything together, we have:
\begin{multline*}    
\mathbb{E}\left\lbrack \left( Z-\mathbb{ E}\left\lbrack Z\mid B_{Z,1},\ldots,B_{Z,t} \right\rbrack \right)^{2} \right\rbrack \geq \left( \mathrm I \right) - 2 \cdot \left( \text{II} \right) \geq \frac{4}{147} \cdot \frac{1}{t} - 2 \cdot \brb{ \left( \text{III} \right) + \left( \text{IV} \right) } 
\\
\geq \frac{4}{147} \cdot \frac{1}{t} - 2 \cdot \left( \frac{2}{9}\exp\left( - \frac{49}{162} \cdot \frac{t}{M^{2}} \right) + \sqrt{\frac{4}{147}}\sqrt{18 + 2\left( t\frac{\exp\left( - \frac{392}{9}\frac{t + 2}{M^{2}} \right)}{1 - 2\exp\left( - \frac{392}{9}\frac{t + 2}{M^{2}} \right)} \right)^{2}} \cdot \frac{1}{t^{3/2}} \right) \eqqcolon \left( \spadesuit \right)
\end{multline*}
Elementary computations show that:
\begin{itemize}
\item
  if \(t \geq 274M^{4}\) then
  \(\frac{4}{9}\exp\left( - \frac{49}{162} \cdot \frac{t}{M^{2}} \right) \leq \frac{1}{147} \cdot \frac{1}{t}\)
\item
  if \(t \geq 2M^{4}\) then
  \(\exp\left( - \frac{392}{9}\frac{t + 2}{M^{2}} \right) \leq \frac{1}{t}\)
\item
  if \(t \geq \frac{4}{100}M^{2}\) then
  \(1 - 2\exp\left( - \frac{392}{9}\frac{t + 2}{M^{2}} \right) \geq \frac{1}{2}\)
\item
  therefore, if \(t \geq 2M^{4}\) and \(t \geq \frac{4}{100}M^{2}\) and
  \(t \geq 61152\) then
  \(2\sqrt{\frac{4}{147}}\sqrt{18 + 2\left( t\frac{\exp\left( - \frac{392}{9}\frac{t + 2}{M^{2}} \right)}{1 - 2\exp\left( - \frac{392}{9}\frac{t + 2}{M^{2}} \right)} \right)^{2}} \cdot \frac{1}{t^{\frac{3}{2}}} \leq \frac{1}{147} \cdot \frac{1}{t}.\)
\end{itemize}
These together with $( \spadesuit )$ implies that, if \(t \geq \max\left( 274M^{4},61152 \right)\), which
is in particular implied by \(t \geq 580M^{4}\), we have that
\[\mathbb{E}\left\lbrack \left( Z-\mathbb{ E}\left\lbrack Z\mid B_{Z,1},\ldots,B_{Z,t} \right\rbrack \right)^{2} \right\rbrack \geq \frac{2}{147} \cdot \frac{1}{t},\]
which implies that if \(2\left( t - 1 \right) \geq 580M^{4}\), which is
implied by \(t \geq 580M^{4}\),
\[\mathbb{E}\left\lbrack \left( Z-\mathbb{ E}\left\lbrack Z\mid B_{Z,1},\ldots,B_{Z,2\left( t - 1 \right)} \right\rbrack \right)^{2} \right\rbrack \geq \frac{1}{147} \cdot \frac{1}{t - 1}.\]

\section{Missing details in the proof of Theorem~\ref{t:lower-bound-full-general}}

    For each $\e \in \bsb{ -\frac{1}{4},\frac{1}{4} }$, consider the distribution
    \[
        \nu_\e \coloneqq \frac{1}{4}\delta_0 + \lrb{\frac{1}{4} + \e}\delta_{1/3} + \lrb{\frac{1}{4} - \e}\delta_{2/3} + \frac{1}{4}\delta_{1} \;.
    \]
    Consider for each $\e \in \bsb{ -\frac{1}{4},\frac{1}{4} }$ an i.i.d.\ sequence $(B_{\e,t})_{t \in \N}$ of Bernoulli random variables of parameter $\frac{1}{2}+2\e$, and consider two i.i.d.\ sequences $(B_t)_{t \in \N}, (\tilde{B}_t)_{t \in \N}$ of parameter $1/2$, such that the family of random variables $\lrb{ (B_{\e,t})_{\e\in [-\frac{1}{4},\frac{1}{4}], t \in \N}, (B_t)_{t \in \N}, (\tilde{B}_t)_{t \in \N} }$ is an independent family.
    For each $t \in \N$ and each $\e \in [-\frac{1}{4},\frac{1}{4}]$, define
    \begin{equation}
        V_{\e,t} \coloneqq \frac{1}{3} (1-B_t)B_{\e,t}+\frac{2}{3}(1-B_t)(1-B_{\e,t})+B_t\tilde{B}_t \;,    
    \end{equation}
    and, for each $\e \in \bsb{ -\frac{1}{4},\frac{1}{4} }$, notice that $(V_{\e,t})_{t \in \N}$ is an i.i.d\ sequence with common distribution $\nu_\e$.
    For any $\e\in\bsb{ -\frac{1}{4},\frac{1}{4} }$, $p\in[0,1]$, and $t\in \N$, let $\GFT_{\e,t}(p) \coloneqq \gft(p,V_{\e,2t-1}, V_{\e,2t})$.
    For each $\e \in \bsb{ -\frac{1}{8},\frac{1}{8} }$ and each $t \in \N$, note that:
    \begin{align}    
        \label{e:opt_1}
        &
        \max_{p \in \lcb{\frac{1}{3},\frac{2}{3}}} \E\bsb{\GFT_{\e,t}(p)} = \max_{p \in [0,1]} \E\bsb{\GFT_{\e,t}(p)}
        \\
        \label{e:opt_2}
        &
        \min_{p \in \lcb{\frac{1}{3},\frac{2}{3}}} \E\bsb{\GFT_{\e,t}(p)} - \max_{p \in [0,1]\m \lcb{\frac{1}{3},\frac{2}{3}}} \E\bsb{\GFT_{\e,t}(p)} = \Omega(1)
        \\
        \label{e:opt_3}
        &
        \E\bsb{\GFT_{\e,t}(\fracc{1}{3})} - \E\bsb{\GFT_{\e,t}(\fracc{2}{3})} =  \operatorname{sgn}(\e) \cdot \Omega\brb{\labs{\e}} 
    \end{align}
    Fix a time horizon $T \in \N$ and select $\e \coloneqq T^{-1/2}$.
    We will show that for each algorithm for the full-feedback setting and each time horizon $T$, if $R_T^{\nu}$ is the regret of the algorithm at time horizon $T$ when the underlying distribution of the traders' valuations is $\nu$, then $\max\brb{ R_T^{\nu_{-\e}}, R_T^{\nu_{+\e}} } = \Omega\brb{ \sqrt{T} }$.
    Notice that, by posting prices in the wrong region $[0,1]\m\{1/3\}$ (resp., $[0,1]\m\{2/3\}$) in the $+\e$ (resp., $-\e$) case, the learner incurs a $\Omega(\e) = \Omega\brb{1/\sqrt{T}}$ instantaneous regret by \eqref{e:opt_1}, \eqref{e:opt_2}, and \eqref{e:opt_3}.
    Then, in order to attempt suffering less than $\Omega\brb{1/\sqrt{T} \cdot T} = \Omega\brb{\sqrt{T }}$ regret, the algorithm would have to detect the sign of $\pm\e$ and play accordingly.
    However, the algorithm has no means to gather enough information to accomplish this task in due time. 
    In fact, notice that the feedback received from the two traders at time $t$ after having posted a price $p$ is $V_{\pm\e,2t-1}$ and $V_{\pm\e,2t}$, which can't give more information about $\pm\e$ than the information carried by the two Bernoullis $B_{\pm\e,2t-1}$ and $B_{\pm\e,2t}$. 
    Since (via an information-theoretic argument) in order to distinguish the sign of $\pm\e$ having access to i.i.d.\ Bernoulli random variables of parameter $\frac{1}{2}\pm 2\e$ requires $\Omega(1/\e^2) = \Omega( T )$ samples, the algorithm will have already suffered a regret $\Omega\brb{ T } \cdot \Omega(1/\sqrt{T}) = \Omega\brb{ \sqrt{T}}$ before having the chance to distinguish the sign of $\pm\e$.

\end{document}